\newcommand{\rd}{{\mathrm d}}
\newcommand{\bC}{\mathbb{C}}
\newcommand{\bR}{\mathbb{R}}
\newcommand{\bN}{\mathbb{N}}
\renewcommand{\phi}{\varphi}
\renewcommand{\ge}{\geqslant}
\renewcommand{\le}{\leqslant}
\newtheorem{thm}{Theorem}[section]
\newtheorem{lem}[thm]{Lemma}        
\newtheorem{cor}[thm]{Corollary}        
\tikzset{
    table/.style={
        matrix of nodes,
        row sep=-\pgflinewidth,
        column sep=-\pgflinewidth,
        nodes={
            rectangle,
            draw=black,
            align=center
        },
        minimum height=1.5em,
        text depth=0.5ex,
        text height=2ex,
        nodes in empty cells,
        every even row/.style={
            nodes={fill=gray!20}
        },
        column 1/.style={
            nodes={text width=2em,font=\bfseries}
        },
        row 1/.style={
            nodes={
                fill=black,
                text=white,
                font=\bfseries
            }
        }
    }
}
\newcommand{\norm}[1]{\lVert #1 \rVert}
\newcommand{\abs}[1]{\left| #1 \right|}
\newcommand{\subsumk}{k_\text{SS}}
\newcommand{\subsumkl}[1]{\subsumk^{(l)}}
\begin{document}

\title{On a Family of Decomposable Kernels on Sequences}

\author{\name Andrea Baisero \email andrea.baisero@ipvs.uni-stuttgart.de\\
  \addr Institute for Parallel and Distributed Systems\\
  Machine Learning and Robotics, University of Stuttgart\\
  Stuttgart, Germany
  \AND
  \name Florian T. Pokorny \email fpokorny@csc.kth.se\\
  \name Carl Henrik Ek \email chek@csc.kth.se \\
  \addr School of Computer Science and Communication\\
       Royal Institute of Technology, KTH\\
       Stockholm, Sweden
}

\maketitle

\begin{abstract}
In many applications data is naturally presented in terms of orderings of some basic elements or symbols. Reasoning about such data requires a notion of similarity capable of handling sequences of different lengths. In this paper we describe a family of Mercer kernel functions for such sequentially structured data. The family is characterized by a decomposable structure in terms of symbol-level and structure-level similarities, representing a specific combination of kernels which allows for efficient computation. We provide an experimental evaluation on sequential classification tasks comparing kernels from our family of kernels to a state of the art sequence kernel called the Global Alignment kernel which has been shown to outperform Dynamic Time Warping.
\end{abstract}

\begin{keywords}
  Kernel, Sequences
\end{keywords}

\section{Introduction}\label{sec:introdution}
Many types of data have inherent sequential structure. Sequences of letters in computational linguistics, series of images in computer vision or cell structures in computational biology and arbitrary data sets depending on a parameter such as time provide familiar examples of such data. It is hence not surprising that there exists a significant amount of work focused on representing such data. In \citep{Rieck:2011tz} the author reviews and broadly categorizes sequential similarity measures into three main categories: \emph{bag-of-words}, \emph{edit-distance} and \emph{string-kernel} based methods. Bag-of-words \citep{Harris:1970wr} based similarity measures translate the notion of a sequence to a distribution over certain sub-sequences (\emph{i.e.} words in natural language processing) of the sequence itself, meaning that such measures only encode the sequential structure up to the length of the sub-sequence and disregard information about word order. As such, Bag-of-words methods require us to be able to identify significant sub-sequences (the words), which is not always obvious for sequences arising outside natural language. Nevertheless, this approach captures some structure and, as the sequential data is translated into a vector space whose basis consists of elementary subsequences, it allows us to interpret the data and enables us to use well-developed learning methods for such vectorial data. Techniques based on edit distances \citep{Damerau:1964hu,Levenshtein:1966ts} relate sequences by defining a transformation from one sequence to the other and associating a cost to the transformation. Edit distances can be very useful if the notion of cost with respect to different transformations is well grounded. The third category refers to (dis)similarity measures defined by implicitly specifying an inner-product space through a kernel function between sequences. String kernels \citep{Lodhi:2002ts,Rousu:2006vw} were proposed in Computational Linguistics, where data consists of sequences (text) of discrete symbols (letters). The (dis)similarity measure is defined in terms of ``gaps'' between symbols in the two sequences. String kernels are a specific instance of a larger class of kernel functions referred to as rational kernels \citep{Cortes:2004vf}. Rational kernels are related to weighted automata \citep{Mohri:2009ed} and define inner products from the specific sequential structure described by the automata. In this paper, we will focus on a new family of kernel based (dis)similarity measures.

The contributions of this work are in particular:
\emph{a)} A generic approach for the construction of sequence kernels which scales $\mathcal{O}(nm)$ in the lengths $n, m$ of the input strings. \emph{b)} The kernel decomposes intuitively into structure-level and symbol-level similarities. Compared to previous approaches, the structure of the symbol space can be encoded by any Mercer kernel.
\emph{c)} We show that a recently proposed intuitive (dis)similarity measure on sequences \citep{Baisero:2013tm}, is positive definite kernel and falls into our class.
\emph{d)} We compare and evaluate several kernels from our family which perform favourably against the state of the art Global Alignment kernel.

\section{Related work}
The three main sequence similarity approaches discussed above are all based on the concept that sequence similarity is defined in terms of discrete unordered symbols, and the similarity between two symbols $a, b, \in \Sigma$ is typically is defined by zero if $a\neq b$ and one otherwise. However, for many types of data the symbol space $\Sigma$ might be continuous, and we might in fact have a natural similarity measure on $\Sigma$ itself. As an example, consider the problem of matching two discretized waveforms $\alpha=[\alpha_1, \ldots, \alpha_n]$, $\beta=[\beta_1, \ldots, \beta_m]$ where $\alpha_i, \beta_i\in \bR=\Sigma$ and where there exists a natural distance $\norm{a-b}$ for $a, b\in \Sigma=\bR$. A popular similarity measure closely related to edit distances is Dynamic Time Warping \citep{Sakoe:1978jp,Muller:2007us}. It provides a similarity measure based on the cost of aligning two sequences such that the sum of matching each element is minimized. This measure does not by itself correspond to a positive definite kernel function \citep{Bahlmann:2002hi} and hence lacks a geometrical interpretation. One approach has been to use the dynamic time warping distance inside a radial basis exponential kernel function \citep{Lei:2007kb,Bahlmann:2002hi}. However this still suffers from the drawback that dynamic time warping is not a kernel itself. Even though non-positive kernels have been shown to be useful \citep{Haasdonk:2005di} in practice, they lack a geometrical interpretation and the mathematical justification which makes the use of kernel methods so appealing.

Motivated by the intuition for the definition of dynamic time warping, \citep{Cuturi:2007db} developed a related similarity measure which in fact corresponds to a valid kernel function for sequences. Here a (dis)similarity function is defined by summarizing all possible alignments between two sequences through a `soft-min' rather than using only the minimal cost alignment as in dynamic time warping. Importantly, compared to previous kernels on sequences, this kernel is capable of incorporating a structured non-discrete symbol space $\Sigma$.  The resulting kernel is referred to as the Global Alignment kernel and was shown to outperform Dynamic Time Warping for sequence classification. However, to be a valid Mercer kernel, the structure of the symbol space $\Sigma$ have to be induced by a specific class of kernel functions. Further, it strongly favors small sequence perturbations over larger perturbations which reduces the ability of the kernel to generalize example data. Some of these issues have been addressed in \citep{Cuturi:2010vq} where only a subset of the possible alignments contributes to the inner-product.

Another approach was taken in \citep{Baisero:2013tm}, where we proposed a (dis)similarity measure called the Path kernel. Just like the Global Alignment kernel, this kernel is defined by reasoning about the (dis)similarity of all possible alignments of two sequences. In experiments, the Path kernel performed better than the Global Alignment kernel for a set of experiments both with respect to accuracy and computational cost. We will show that this kernel naturally falls into a class of kernels that we will define in this work, thus proving that it is positive semi-definite\footnote{Note that, the Path kernel defined in \citep{Baisero:2013tm} is not related to the special Rational kernel proposed in \citep{Takimoto:2003uq}, which is also referred to as a Path kernel}.

\section{On the construction of sequence kernels}\label{sec:sequencekernels}
We are interested in finite sequences $\mathbf{s} = (\mathbf{s}_1,\ldots,\mathbf{s}_{|\mathbf{s}|})$, with symbols $\mathbf{s}_i\in \Sigma$, belonging to a symbol space $\Sigma$ which can be discrete or continuous. We denote the set of such finite sequences by $Seq(\Sigma)$ and are interested in studying combinations of Mercer kernel functions on symbols $k_{\Sigma}: \Sigma \times \Sigma \rightarrow \mathbb{R}$ that yield valid Mercer kernels on a sequential level $Seq(\Sigma)$. We follow the convention of calling a kernel $k:X\times X \to \bR$ positive definite if $\sum_{i, j=1}^n c_i k(x_i, x_j) c_j \ge 0$ for any finite subset $\{x_1, \ldots, x_n\}\subset X$, $n\in\mathbb{N}$ and any $\{c_1, \ldots, c_n\}\subset \bR$. Let us now describe a novel general approach towards the construction of such kernels for sequences belonging to $Seq(\Sigma)$:
\begin{lem}
Let $k_{\Sigma}:\Sigma\times \Sigma\to \bR$ be a continuous positive definite kernel on $\Sigma$, where $\Sigma$ is a separable metric space and let $k_{S}:\bN \times \bN\to \bR$ be a positive definite kernel on integers. Then the kernel
\begin{equation}
  k(\mathbf{s}, \mathbf{t}) = \sum_{i=1}^{|\mathbf{s}|}\sum_{j=1}^{|\mathbf{t}|} k_{\Sigma}(\mathbf{s}_i, \mathbf{t}_j)k_{S}(i, j),
\label{eq:sequencekernel}
\end{equation}
defined for any finite sequences $\mathbf{s}, \mathbf{t}\in Seq(\Sigma)$, $\mathbf{s}=(\mathbf{s}_1, \ldots, \mathbf{s}_{|\mathbf{s}|})$ and $\mathbf{t}=(\mathbf{t}_1, \ldots, \mathbf{t}_{|\mathbf{t}|})$ is also positive definite.
\label{lem:lemma1}
\end{lem}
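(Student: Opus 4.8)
The plan is to exhibit an explicit feature map for $k$ into a Hilbert space, after which positive definiteness is automatic, since any Gram matrix of a genuine inner product is positive semidefinite. The hypotheses are tailored to this: because $k_\Sigma$ is a continuous positive definite kernel on the separable metric space $\Sigma$, Mercer's theorem (or, more abstractly, the Moore--Aronszajn construction of the RKHS) yields a feature map $\phi_\Sigma:\Sigma\to\cH_\Sigma$ with $k_\Sigma(a,b)=\langle\phi_\Sigma(a),\phi_\Sigma(b)\rangle$; likewise positive definiteness of $k_S$ on $\bN$ yields $\phi_S:\bN\to\cH_S$ with $k_S(i,j)=\langle\phi_S(i),\phi_S(j)\rangle$. (Positive definiteness alone already guarantees such maps; the continuity and separability assumptions are what let one realize them concretely via a Mercer expansion.)

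Next I would pass to the tensor product Hilbert space $\cH_\Sigma\otimes\cH_S$, whose inner product factorizes as $\langle u\otimes v,\,u'\otimes v'\rangle=\langle u,u'\rangle\,\langle v,v'\rangle$. This is precisely the standard argument that a product of kernels is a kernel: it realizes the summand as $k_\Sigma(a,b)\,k_S(i,j)=\langle\phi_\Sigma(a)\otimes\phi_S(i),\,\phi_\Sigma(b)\otimes\phi_S(j)\rangle$. I would then define the sequence-level feature map $\Phi:Seq(\Sigma)\to\cH_\Sigma\otimes\cH_S$ by $\Phi(\mathbf{s})=\sum_{i=1}^{|\mathbf{s}|}\phi_\Sigma(\mathbf{s}_i)\otimes\phi_S(i)$, which is a finite sum and hence well defined. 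Expanding $\langle\Phi(\mathbf{s}),\Phi(\mathbf{t})\rangle$ by bilinearity and applying the factorization reproduces exactly the double sum of \EQ{sequencekernel}, so $k(\mathbf{s},\mathbf{t})=\langle\Phi(\mathbf{s}),\Phi(\mathbf{t})\rangle$ and the claim follows.

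I expect the only genuinely delicate point to be the bookkeeping in this last step, namely that the scalar attached to a whole sequence must be shared by all of its positions. To make this transparent I would also record a fully elementary route that sidesteps Hilbert spaces. Given finite sequences $\mathbf{s}^{(1)},\ldots,\mathbf{s}^{(n)}$ and scalars $c_1,\ldots,c_n$, index all symbol--position pairs $(p,i)$ with $1\le i\le|\mathbf{s}^{(p)}|$ and form the two Gram matrices $A_{(p,i),(q,j)}=k_\Sigma(\mathbf{s}^{(p)}_i,\mathbf{s}^{(q)}_j)$ and $B_{(p,i),(q,j)}=k_S(i,j)$, both positive semidefinite by hypothesis. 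Setting $d_{(p,i)}:=c_p$ (independent of $i$) and writing $\circ$ for the Hadamard product, one checks that $\sum_{p,q}c_p c_q\,k(\mathbf{s}^{(p)},\mathbf{s}^{(q)})=d^{\top}(A\circ B)\,d$. The Schur product theorem gives $A\circ B\succeq0$, so the quadratic form is nonnegative. Here the entire content is the index accounting that collapses the outer double sum over sequences into a single quadratic form in the stacked coefficient vector $d$, which is exactly the subtlety flagged above in the feature-map version.
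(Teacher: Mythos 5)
Your proposal is correct, and both of your routes are sound; the difference from the paper lies not in the decomposition but in how the crucial summation step is justified. The paper takes exactly your starting point: it views the summand as a product kernel on symbol--position pairs by extending $k_{\Sigma}$ and $k_{S}$ to kernels on $U=\Sigma\times\bN$ and multiplying them, then identifies a sequence $\mathbf{s}$ with the finite subset $\{(\mathbf{s}_1,1),\ldots,(\mathbf{s}_n,n)\}\subset U$, and finally cites Lemma 1 of \citep{Haussler1999a} as a black box to conclude that the cross-sum $\sum_{x\in X,y\in Y}K(x,y)$ over two such subsets is positive definite. You instead prove that last step from scratch, twice: your tensor-product feature map $\Phi(\mathbf{s})=\sum_{i}\phi_{\Sigma}(\mathbf{s}_i)\otimes\phi_{S}(i)$ is precisely the feature map underlying Haussler's set kernel in this instance, and your Hadamard-product computation $\sum_{p,q}c_pc_q\,k(\mathbf{s}^{(p)},\mathbf{s}^{(q)})=d^{\top}(A\circ B)\,d$ together with the Schur product theorem is the finite-dimensional version of the same fact (your bookkeeping $d_{(p,i)}=c_p$ is correct, and the Gram matrices $A$, $B$ remain positive semidefinite despite repeated points, since repetitions can be absorbed by merging coefficients). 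What your version buys: it is self-contained, it exhibits an explicit Hilbert-space embedding of sequences that makes the geometry of the kernel visible, and it makes plain that the continuity and separability hypotheses on $\Sigma$ play no role in positive definiteness --- Moore--Aronszajn suffices, a point consistent with the paper's proof, which likewise never invokes those hypotheses. What the paper's version buys: brevity, and an explicit placement of the construction within the convolution-kernel framework, which is the connection the authors emphasize in the discussion following the lemma.
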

\begin{proof}
Observe that both $k_{\Sigma}$ and $k_{S}$ can be trivially extended to kernels on $\Sigma\times \bN$ by $K_1((\mathbf{s}, i), (\mathbf{t}, j))=k_{\Sigma}(\mathbf{s}_i, \mathbf{t}_j)$, $K_2((\mathbf{s}, i), (\mathbf{t}, j))=k_{S}(i, j)$ for $\mathbf{s}_i, \mathbf{t}_j\in \Sigma$ and $i, j\in \bN$. Now $K((\mathbf{s}, i), (\mathbf{t}, j))=K_1((\mathbf{s}, i), (\mathbf{t}, j)) K_2((\mathbf{s}, i), (\mathbf{t}, j))$ is a positive kernel on $U=\Sigma\times \bN$. Let $X, Y$ be finite subsets of $U$. According to Lemma 1, \citep{Haussler1999a}, the kernel
\[
L(X, Y) = \sum_{x\in X,  y\in Y} K(x, y)
\]
is then also positive definite. Note that a sequence $\mathbf{s}=(\mathbf{s}_1, \mathbf{s}_2, \ldots, \mathbf{s}_n)$ corresponds to a subset $X=\{(\mathbf{s}_1, 1), (\mathbf{s}_2, 2), \ldots, (\mathbf{s}_n, n)\}$, and thus the above kernel is positive definite.
\end{proof}
Note that any countable discrete space and any finite dimensional vector space can be given the structure of a separable metric space.
If $\Sigma$ is discrete and countable, any kernel on $\Sigma$ is trivially continuous with respect to the discrete topology.
Note also that, while the above result readily follows from the work on convolution kernels by \citep{Haussler1999a}, the above natural class of kernels has -- to the best of our knowledge -- not been
studied or formulated in this manner. This might be partially, because classical kernels coming from natural language processing often only consider similarity measures on the symbol space $\Sigma$ directly.

We observe that the family of kernels described above relates all pairs of the input sequences' symbols using $k_{\Sigma}$ and adjusts these values according to similarity of positions of the symbols within the sequences, as measured by $k_S$.
An added benefit of the proposed family of kernels is their relative computational simplicity, since kernel evaluations scale like $\mathcal{O}(|s||t|)$
in the length of the input strings $\mathbf{s}, \mathbf{t}$. Noting that,
\begin{equation}
    k(\mathbf{s},\mathbf{t}) = tr(K_{\Sigma}(\mathbf{s}, \mathbf{t})^TK_{S}(|\mathbf{s}|,|\mathbf{t}|)),
\end{equation}
where $[K_\Sigma(\mathbf{s},\mathbf{t})]_{ij} = k_\Sigma(\mathbf{s}_i,\mathbf{t}_j)$, $[K_S(|\mathbf{s}|,|\mathbf{t}|)]_{ij} = k_S(i,j)$, $i=1, \ldots, |\mathbf{s}|$, $j=1, \ldots, |\mathbf{t}|$ and $tr$ denotes the trace, we observe that the matrix $K_{S}$ can be pre-computed once the maximal length of any sequence in a data-set is known. The evaluation of the kernel is then just a trace of a matrix product which can be efficiently implemented.

In a typical scenario $k_{S}$ and $k_{\Sigma}$ might also depend on parameters $\theta_S\in \bR^n$ and $\theta_{\Sigma}\in \bR^m$. These parameters can be set through cross-validation but
they can also be learned if the gradients of the kernel functions with respect to these parameters can be computed. If we wish to use the kernel to represent a functional relationship $f:\mathbf{s}\mapsto\mathbf{y}$, where $\mathbf{y}\in\mathbb{R}^d$, we can encode a preference over the mapping $f$ by a Gaussian process \citep{Rasmussen:2005te}. If the co-variance in the output space is encoded by a sequence kernel $k$, the parameters, $\theta_{\Sigma}$ and $\theta_S$ can then be learned by maximizing the marginal likelihood of the model. In order to accommodate classification in a Gaussian process framework, the regression noise is usually squashed
\citep{Rasmussen:2005te} rendering the integration required to reach the marginal likelihood infeasible. However, it has been observed that learning the parameters for a classification task with $1-C$ encoding, where each class is encoded using a binary variable, and a Gaussian noise assumption works well in practice \citep{Kapoor:2009il}.


\section{Examples of Sequence Kernels}\label{sec:kernels}
\begin{figure*}
  \begin{center}
    \includegraphics[width=0.19\textwidth]{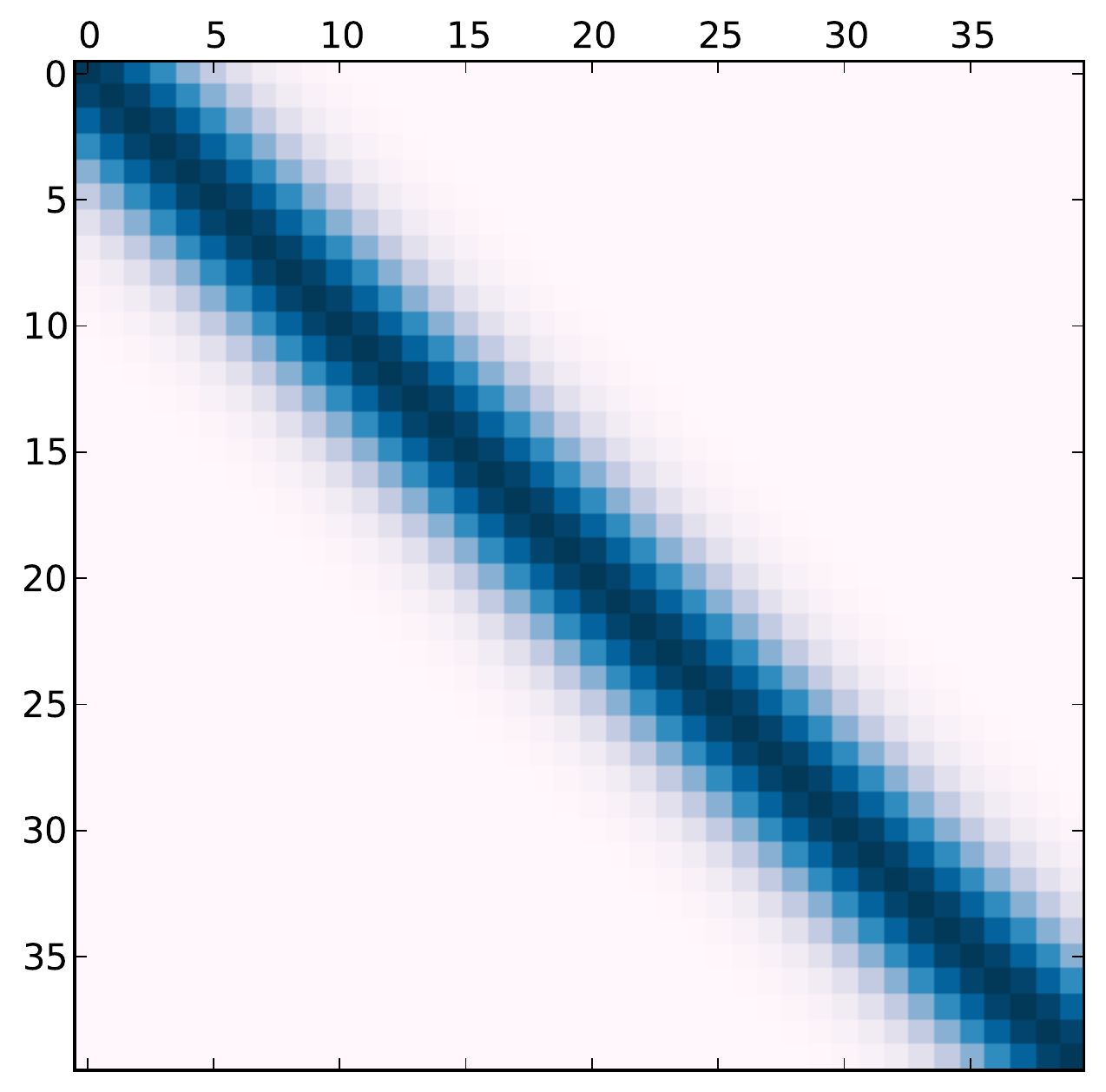}
    \includegraphics[width=0.19\textwidth]{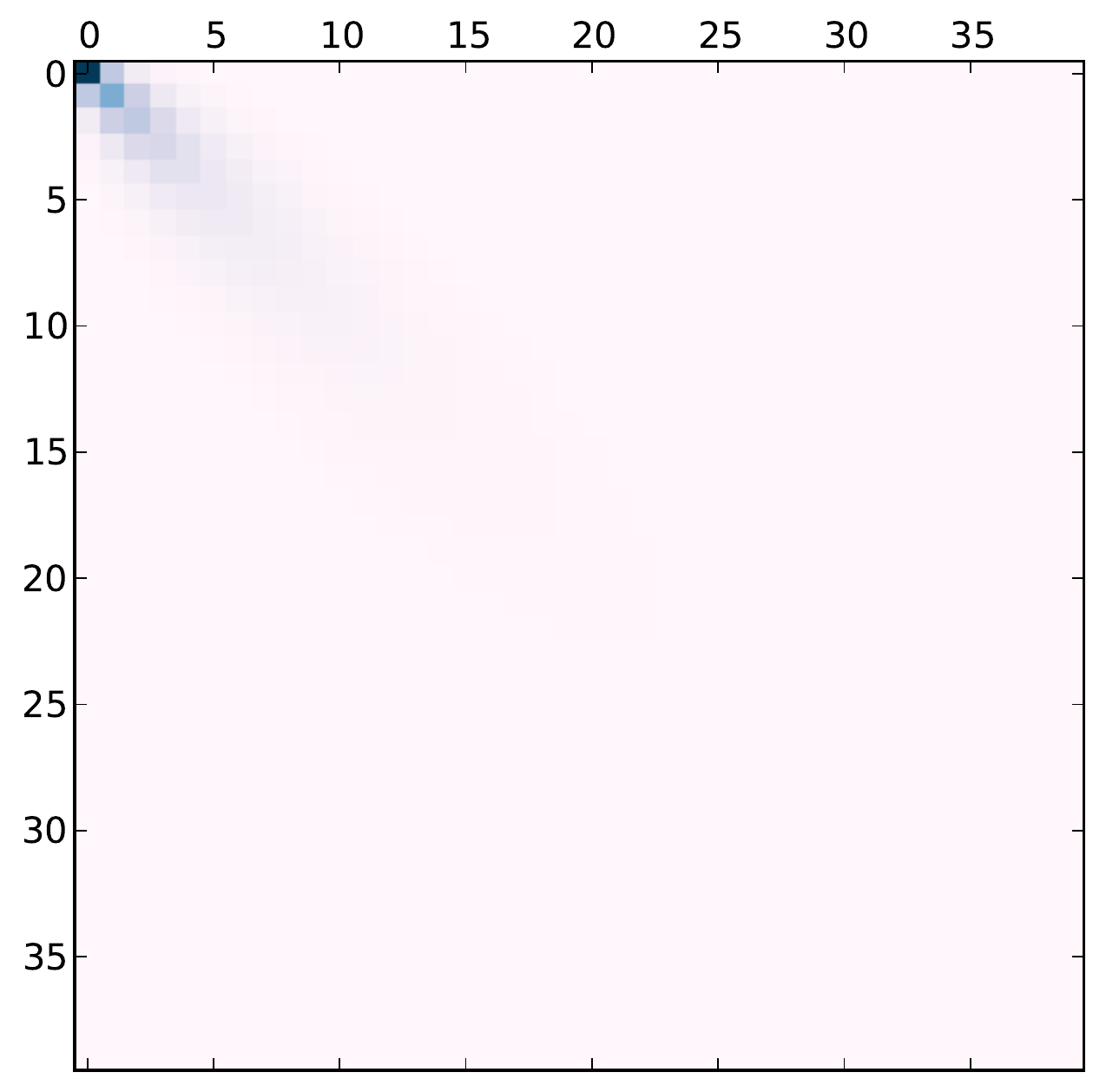}
    \includegraphics[width=0.19\textwidth]{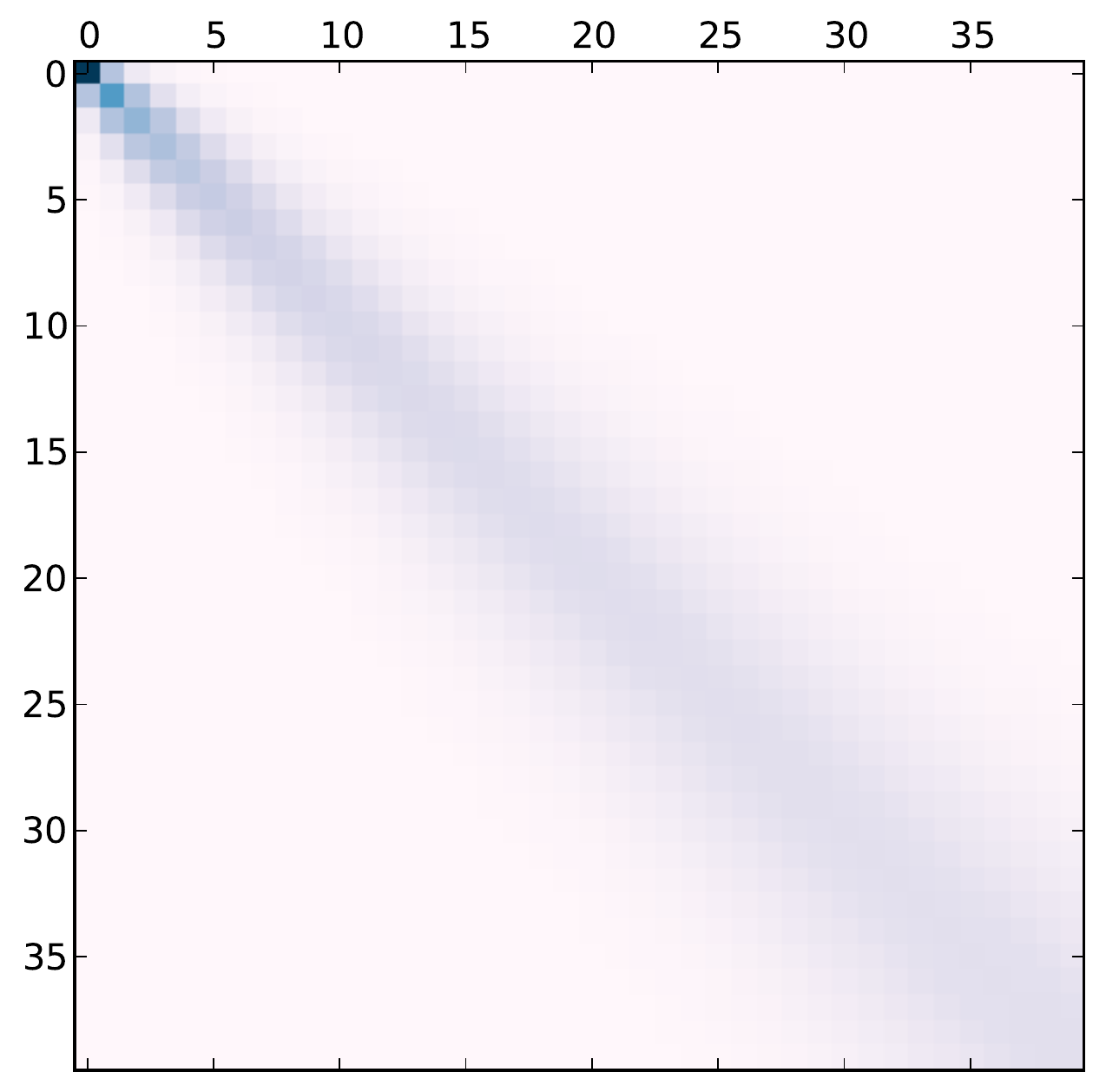}
    \includegraphics[width=0.19\textwidth]{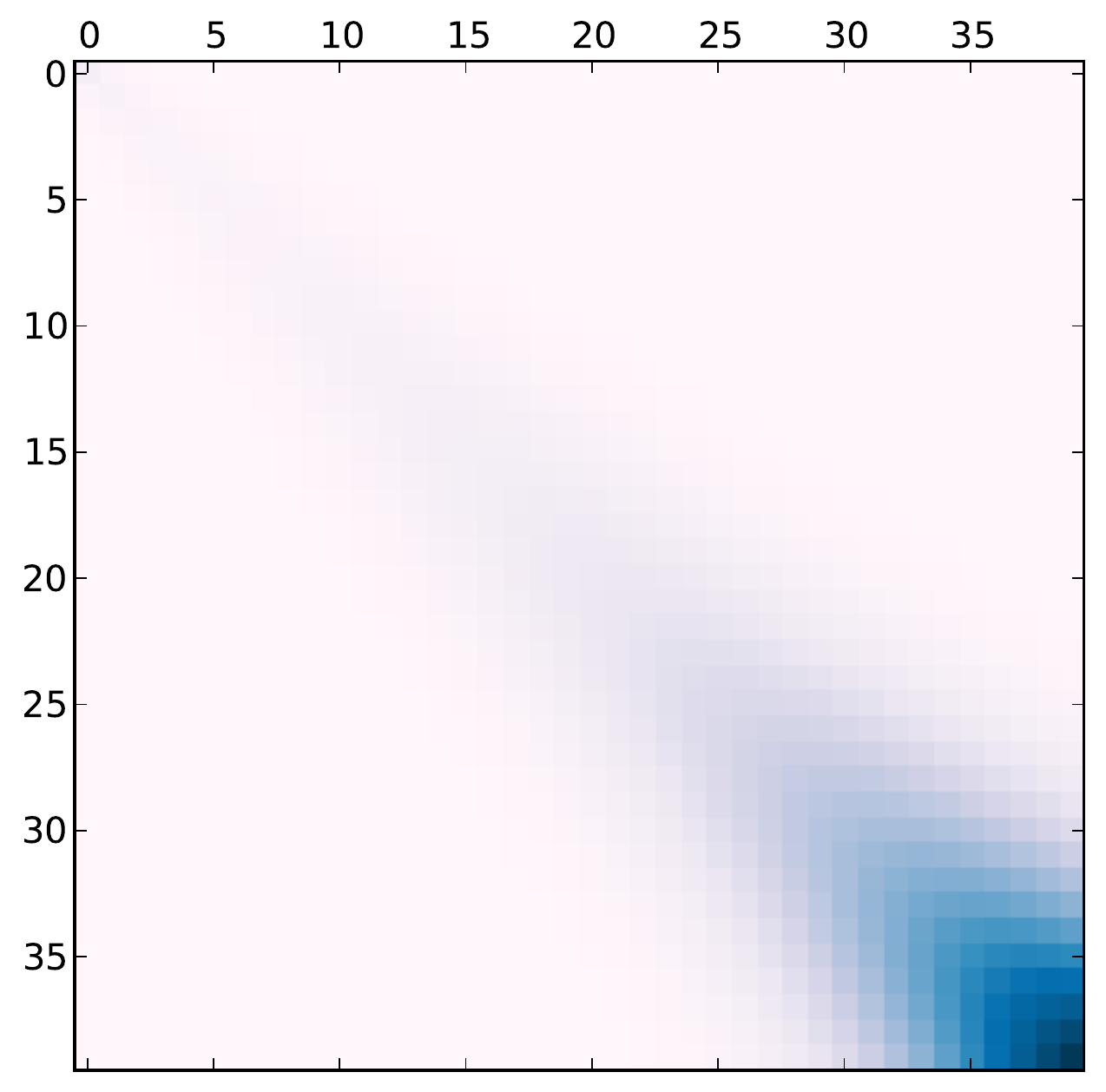}
    \includegraphics[width=0.19\textwidth]{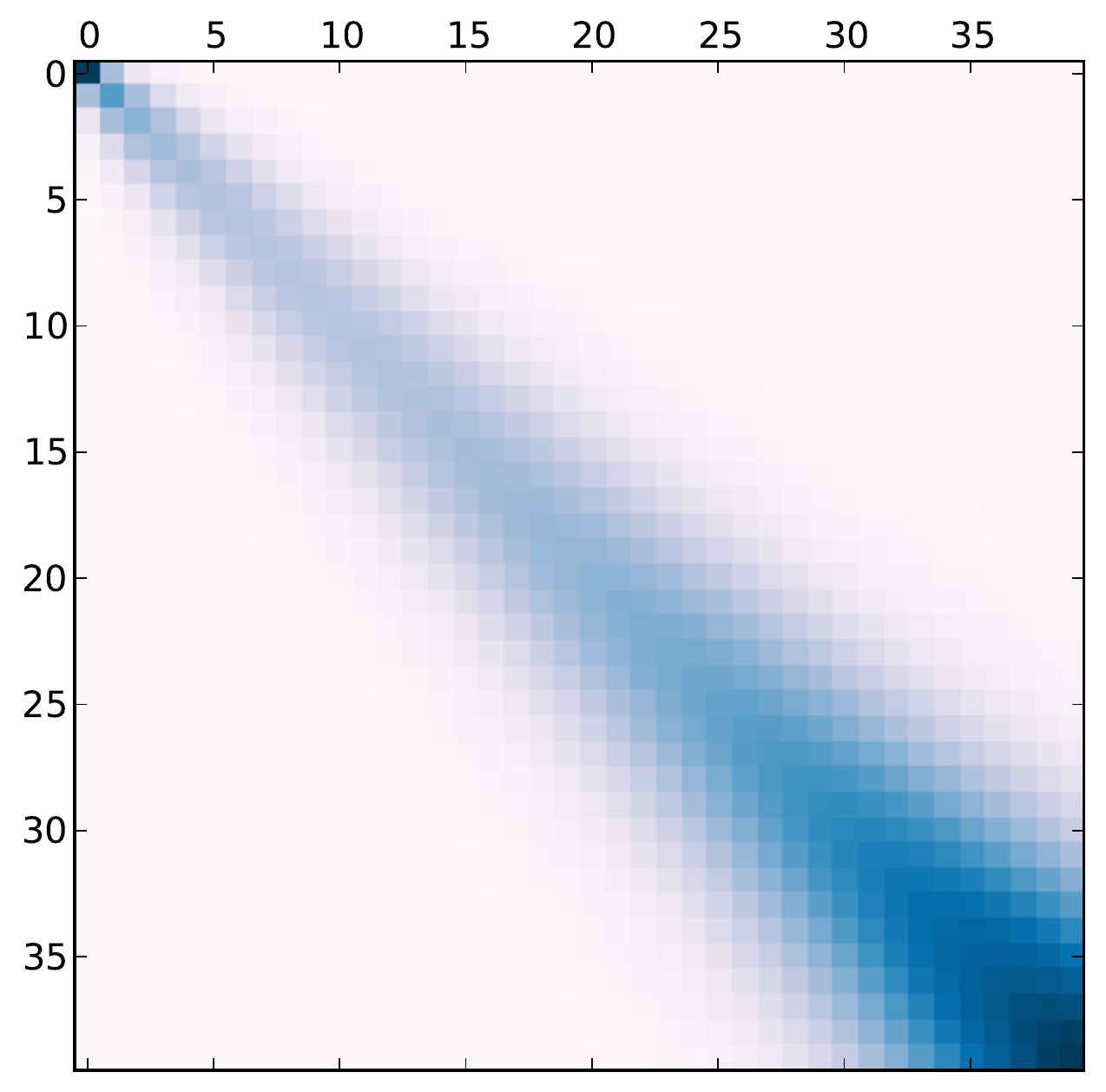}
  \end{center}
  \caption{{\it
      The above figure shows five different structure kernel matrices. The left-most image depicts the exponential kernel while the remaining four show the structure kernel $k_{\Gamma}$ for the path sequence kernel for varying parameter values. The values of $C_{\text{d}} = \{0.3,0.35,0.35,0.3 \}$ and $C_{\text{hv}} = \{0.3,0.33,0.37,0.37\}$. Changing the parameters $C_{\text{d}}$ and $C_{\text{hv}}$ emphasizes different parts of the alignment highlighting the non-stationary structure of how the contribution of different parts of a sequences is accumulated.}}
  \label{fig:kernels}
\end{figure*}

In this paper we will focus on kernels
from the family in Lemma \ref{lem:lemma1}. A straightforward approach to formulate such a sequence kernel would be to pick a familiar kernel $k_S$, where $\abs{i-j}$ determines the impact on $k$. This can be implemented by a stationary kernel $k_{S}$ such as an exponential kernel:
\begin{cor}
For $\alpha>0$, the function $k_e:Seq(\Sigma)\times Seq(\Sigma) \to \bR$ given by,
\begin{equation}
    k_e(\mathbf{s}, \mathbf{t}) = \sum_{i=1}^{|\mathbf{s}|}\sum_{j=1}^{|\mathbf{t}|} k_{\Sigma}(\mathbf{s}_i, \mathbf{t}_j) e^{-\frac{\norm{i-j}^2}{\alpha}},
\end{equation}
is a valid kernel on $Seq(\Sigma)$ corresponding to the exponential structure kernel on $\bN$.
\end{cor}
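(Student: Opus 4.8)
The plan is to view $k_e$ as the member of the family in Lemma~\ref{lem:lemma1} obtained by specializing the structure kernel to the Gaussian
\[
k_{S}(i,j) = e^{-\norm{i-j}^2/\alpha}, \qquad i,j\in\bN.
\]
Since $k_{\Sigma}$ is by hypothesis a continuous positive definite kernel on the separable metric space $\Sigma$, everything reduces to checking that this particular $k_{S}$ is a positive definite kernel on $\bN$; the conclusion then follows by a direct appeal to Lemma~\ref{lem:lemma1}.

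To establish positive definiteness of $k_{S}$ I would first prove it on all of $\bR$ and then restrict, using the elementary fact that the restriction of a positive definite kernel to any subset of its domain remains positive definite (here $\bN\subset\bR$). On $\bR$ the Gaussian kernel is classically positive definite; the cleanest self-contained route is to write
\[
e^{-\norm{x-y}^2/\alpha} = e^{-\norm{x}^2/\alpha}\, e^{2xy/\alpha}\, e^{-\norm{y}^2/\alpha},
\]
and argue in three steps: the inner-product kernel $(x,y)\mapsto xy$ is positive definite; hence $e^{2xy/\alpha}$ is positive definite, since by the Schur product theorem every integer power $(xy)^n$ is positive definite and nonnegative linear combinations and pointwise limits of positive definite kernels remain positive definite (apply this to the Taylor series of the exponential); and finally multiplying by the rank-one factor $g(x)g(y)$ with $g(x)=e^{-\norm{x}^2/\alpha}$ preserves positive definiteness. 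Alternatively one may invoke Bochner's theorem, noting that the Gaussian is the Fourier transform of a nonnegative Gaussian density.

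There is no real obstacle here: the statement is a specialization of Lemma~\ref{lem:lemma1}, and the only nontrivial ingredient is the positive definiteness of the Gaussian, which is standard. The one point worth stating carefully is that $i,j$ range over $\bN$ rather than $\bR$, so the argument must pass through the restriction property; once that is noted, the hypotheses of Lemma~\ref{lem:lemma1} are satisfied and the corollary follows immediately.
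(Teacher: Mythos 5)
Your proof is correct and takes exactly the route the paper intends: the paper states that this corollary ``follows readily'' from Lemma~\ref{lem:lemma1} by taking $k_S$ to be the Gaussian kernel restricted to $\bN$, which is precisely your argument. Your filling-in of the standard positive-definiteness proof for the Gaussian (via the Schur product theorem and the exponential series, or Bochner), together with the restriction-to-a-subset observation, is a sound and complete elaboration of the step the paper leaves implicit.
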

Similarly, we can now define kernels by defining a kernel $k_S$ on integers by restricting any known kernel on $\bR$ to the integers. Examples include the
polynomial kernels $k_S(i, j) = (ij+c)^d$, the perceptron kernel, \emph{etc}.
While the above kernel follows readily from the definition of our class of sequence kernels, we would now like to focus on a secondary viewpoint which, as we will show, also leads to a sequence kernel as in Lemma \ref{lem:lemma1}. In \citep{Baisero:2013tm}, the authors proposed a novel (dis)similarity measure $k_p:Seq(\Sigma)\times Seq(\Sigma)\to \bR$ which can be defined most elegantly in a recursive fashion as,
\begin{align}
  k_p(\mathbf{s},\mathbf{t}) &= \begin{cases} \begin{aligned}
      &k_{\Sigma}(\mathbf{s}_1,\mathbf{t}_1) \\
      &\quad + C_{\text{hv}} k_p(\mathbf{s}_{2:},\mathbf{t}) \\
      &\quad + C_{\text{hv}} k_p(\mathbf{s},\mathbf{t}_{2:}) \\
      &\quad + C_{\text{d}} k_p(\mathbf{s}_{2:},\mathbf{t}_{2:})
    \end{aligned} &
    \begin{aligned}
      &|\mathbf{s}|\ge1 |\mathbf{t}|\ge1\\
      & C_{\text{d}}\ge0\\
      & C_{\text{hv}}\ge0
    \end{aligned}\\
    0 & \text{otherwise,}
  \end{cases}.
  \label{eq:pathkernelrecursive}
\end{align}
where $\mathbf{t_{2:}}$ denotes the sequence obtained by removing the first symbol from $\mathbf{t}\in Seq(\Sigma)$.
The recursive formulation above can be interpreted as accumulating information from all possible alignments of two strings. An alignment of two strings $\mathbf{s}$ and $\mathbf{t}$ is defined by a path $\gamma$ through a matrix $\mathbf{M}$ of size $|\mathbf{s}|\times|\mathbf{t}|$ from element $[\mathbf{M}]_{11}$ to $[\mathbf{M}]_{|\mathbf{s}||\mathbf{t}|}$. Each path defines a different alignment in terms of ``stretches'' of a sequence see Figure~\ref{fig:pathkernel}. Each path is decomposed into series of simple operations which have a different effect, parametrized by $C_{\text{d}}$ and $C_{\text{hv}}$, on the final similarity measure. The cardinality of the set of paths, and therefore of the alignments, for two sequences $\mathbf{s}$ and $\mathbf{t}$ is the Delannoy number $D(|\mathbf{s}|,|\mathbf{t}|)$. In addition to the recursive formulation above, \citep{Baisero:2013tm} also showed that the resulting function can be written in a similar form to Eq.~\ref{eq:sequencekernel} where $k_S=k_{\Gamma}$ and,
\begin{gather}
    k_{\Gamma}(i,j) = \sum_{d=0}^{\min(i,j)-1} C_{\text{hv}}^{i+j-2-2d} C_{\text{d}}^d \frac{(i+j-2-d)!}{(i-1-d)!(j-1-d)!d!}.
\label{eq:gammakernel}
\end{gather}
While the recursive definition of $k_p$ is natural, since it assigns a cost for diagonal and off-diagonal moves in a matrix, the above formula seems rather unintuitive. While \citep{Baisero:2013tm} did not provide a proof of positive definiteness, we will now show that the path kernel $k_p$ does in fact define a positive definite kernel and that $k_p$ naturally falls into the class of kernels considered here.

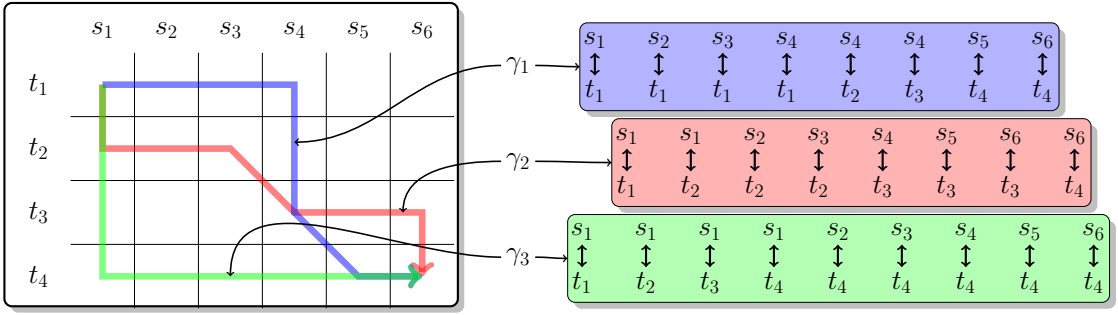
\begin{figure*}
  \begin{center}
    \scalebox{0.50}{\begin{tikzpicture}[scale=1.7]

\tikzset{
  ashadow/.style={opacity=.25, shadow xshift=0.07, shadow yshift=-0.07},
}

  \begin{scope}
    \foreach \y in {1,2,3}{
      \node (0,\y) (lineystart_\y) {};
      \node (6,\y) (lineyend_\y) {};
      \draw[name=liney] (0,\y) -- (6,\y) {};
    }
    \foreach \x in {1,2,3,4,5}{
      \node (\x,0) (linexstart_\x) {};
      \node (\x,4) (linexend_\x) {};
      \draw (\x,0) -- (\x,4) {};
    }

    \foreach \x in {1,2,3,4,5,6}{
      \node[anchor=center] at (\x-0.5,4.35) (s_\x) {\huge $s_{\x}$};
    }     
    \foreach \y in {1,2,3,4}{
      \node[anchor=center] at (-0.5,-\y+4.5) (t_\y){\huge $t_{\y}$};
    }

    \draw[->,line width=5pt,draw=red,opacity=0.5] (0.5,3.5)--(0.5,2.5)--(2.5,2.5)--(3.5,1.5)--(5.5,1.5)--(5.5,0.5) {};
    \draw[->,line width=5pt,draw=blue,opacity=0.5] (0.5,3.5)--(1.5,3.5)--(2.5,3.5)--(3.5,3.5)--(3.5,2.5)--(3.5,1.5)--(4.5,0.5)--(5.5,0.5){};
    \draw[->,line width=5pt,draw=green,opacity=0.5] (0.5,3.5) -- (0.5,0.5) -- (5.5,0.5) {};

    \begin{scope}[on background layer,scale=0.55]
      \node[drop shadow={shadow xshift=.17cm, shadow yshift=-0.17cm},rectangle,draw,ultra thick,fit=(s_1)(s_6)(t_4),fill=white,draw=black,rounded corners=0.2cm] {};
    \end{scope}
    
  \end{scope}

  \begin{scope}[xshift=7cm,yshift=0.3cm]

    \node[anchor=center] at (0,3.5) (g1_1) {\huge $\gamma_1$ };
    \node[anchor=center] at (1.2,3.9) (gamma1_top_1) {\huge $s_1$};
    \node[anchor=center] at (2.2,3.9) (gamma1_top_2) {\huge $s_2$};
    \node[anchor=center] at (3.2,3.9) (gamma1_top_3) {\huge $s_3$};
    \node[anchor=center] at (4.2,3.9) (gamma1_top_4) {\huge $s_4$};
    \node[anchor=center] at (5.2,3.9) (gamma1_top_5) {\huge $s_4$};
    \node[anchor=center] at (6.2,3.9) (gamma1_top_6) {\huge $s_4$};
    \node[anchor=center] at (7.2,3.9) (gamma1_top_7) {\huge $s_5$};
    \node[anchor=center] at (8.2,3.9) (gamma1_top_8) {\huge $s_6$};

    \node[anchor=center] at (1.2,3.1) (gamma1_bottom_1) {\huge $t_1$};
    \node[anchor=center] at (2.2,3.1) (gamma1_bottom_2) {\huge $t_1$};
    \node[anchor=center] at (3.2,3.1) (gamma1_bottom_3) {\huge $t_1$};
    \node[anchor=center] at (4.2,3.1) (gamma1_bottom_4) {\huge $t_1$};
    \node[anchor=center] at (5.2,3.1) (gamma1_bottom_5) {\huge $t_2$};
    \node[anchor=center] at (6.2,3.1) (gamma1_bottom_6) {\huge $t_3$};
    \node[anchor=center] at (7.2,3.1) (gamma1_bottom_7) {\huge $t_4$};
    \node[anchor=center] at (8.2,3.1) (gamma1_bottom_8) {\huge $t_4$};

    \foreach \x in {1,2,3,4,5,6,7,8}{
      \draw[<->,ultra thick] (gamma1_top_\x) -- (gamma1_bottom_\x){};
    }

    \node[anchor=center] at (0,2) (g2_1){\huge $\gamma_2$ };
    \node[anchor=center] at (1.7,2.4) (gamma2_top_1) {\huge $s_1$};
    \node[anchor=center] at (2.7,2.4) (gamma2_top_2) {\huge $s_1$};
    \node[anchor=center] at (3.7,2.4) (gamma2_top_3) {\huge $s_2$};
    \node[anchor=center] at (4.7,2.4) (gamma2_top_4) {\huge $s_3$};
    \node[anchor=center] at (5.7,2.4) (gamma2_top_5) {\huge $s_4$};
    \node[anchor=center] at (6.7,2.4) (gamma2_top_6) {\huge $s_5$};
    \node[anchor=center] at (7.7,2.4) (gamma2_top_7) {\huge $s_6$};
    \node[anchor=center] at (8.7,2.4) (gamma2_top_8) {\huge $s_6$};

    \node[anchor=center] at (1.7,1.6) (gamma2_bottom_1) {\huge $t_1$};
    \node[anchor=center] at (2.7,1.6) (gamma2_bottom_2) {\huge $t_2$};
    \node[anchor=center] at (3.7,1.6) (gamma2_bottom_3) {\huge $t_2$};
    \node[anchor=center] at (4.7,1.6) (gamma2_bottom_4) {\huge $t_2$};
    \node[anchor=center] at (5.7,1.6) (gamma2_bottom_5) {\huge $t_3$};
    \node[anchor=center] at (6.7,1.6) (gamma2_bottom_6) {\huge $t_3$};
    \node[anchor=center] at (7.7,1.6) (gamma2_bottom_7) {\huge $t_3$};
    \node[anchor=center] at (8.7,1.6) (gamma2_bottom_8) {\huge $t_4$};

    \foreach \x in {1,2,3,4,5,6,7,8}{
      \draw[<->,ultra thick] (gamma2_top_\x) -- (gamma2_bottom_\x){};
    }

    \node at (0,0.5) (g3_1){\huge $\gamma_3$ };
    \node[anchor=center] at (1,0.9) (gamma3_top_1) {\huge $s_1$};
    \node[anchor=center] at (2,0.9) (gamma3_top_2) {\huge $s_1$};
    \node[anchor=center] at (3,0.9) (gamma3_top_3) {\huge $s_1$};
    \node[anchor=center] at (4,0.9) (gamma3_top_4) {\huge $s_1$};
    \node[anchor=center] at (5,0.9) (gamma3_top_5) {\huge $s_2$};
    \node[anchor=center] at (6,0.9) (gamma3_top_6) {\huge $s_3$};
    \node[anchor=center] at (7,0.9) (gamma3_top_7) {\huge $s_4$};
    \node[anchor=center] at (8,0.9) (gamma3_top_8) {\huge $s_5$};
    \node[anchor=center] at (9,0.9) (gamma3_top_9) {\huge $s_6$};

    \node[anchor=center] at (1,0.1) (gamma3_bottom_1) {\huge $t_1$};
    \node[anchor=center] at (2,0.1) (gamma3_bottom_2) {\huge $t_2$};
    \node[anchor=center] at (3,0.1) (gamma3_bottom_3) {\huge $t_3$};
    \node[anchor=center] at (4,0.1) (gamma3_bottom_4) {\huge $t_4$};
    \node[anchor=center] at (5,0.1) (gamma3_bottom_5) {\huge $t_4$};
    \node[anchor=center] at (6,0.1) (gamma3_bottom_6) {\huge $t_4$};
    \node[anchor=center] at (7,0.1) (gamma3_bottom_7) {\huge $t_4$};
    \node[anchor=center] at (8,0.1) (gamma3_bottom_8) {\huge $t_4$};
    \node[anchor=center] at (9,0.1) (gamma3_bottom_9) {\huge $t_4$};
    \foreach \x in {1,2,3,4,5,6,7,8,9}{
      \draw[<->,ultra thick] (gamma3_top_\x) -- (gamma3_bottom_\x){};
    }

   \begin{scope}[on background layer,scale=0.6]
    \node[drop shadow={shadow xshift=.17cm, shadow yshift=-0.17cm}, rectangle,draw,fit={(gamma1_bottom_1)(gamma1_top_8)},fill=blue!30,draw=black,rounded corners=0.2cm] (g1_box){};
    \node[drop shadow={shadow xshift=.17cm, shadow yshift=-0.17cm}, rectangle,draw,fit=(gamma2_bottom_1)(gamma2_top_8),fill=red!30,draw=black,rounded corners=0.2cm] (g2_box){};
    \node[drop shadow={shadow xshift=.17cm, shadow yshift=-0.17cm}, rectangle,draw,fit=(gamma3_bottom_1)(gamma3_top_9),fill=green!30,draw=black,rounded corners=0.2cm] (g3_box){};
    
   \end{scope}

  \end{scope}

  \draw[->,very thick] (g1_1) to[out=180,in=0] (3.5,2.6){};
  \draw[->,very thick] (g1_1) to[out=0,in=180] (g1_box.west){};
  \draw[->,very thick] (g2_1) to[out=180,in=90] (5.2,1.5){};
  \draw[->,very thick] (g2_1) to[out=0,in=180] (g2_box.west){};
  \draw[->,very thick] (g3_1) to[out=180,in=90] (2.5,0.5){};
  \draw[->,very thick] (g3_1) to[out=0,in=180] (g3_box.west){};

\end{tikzpicture}}
  \end{center}
  \caption{{\it
    The image above depicts the Path kernel interpreted as an alignments of paths. Two sequences $\mathbf{t}=[\mathbf{t}_1,\ldots,\mathbf{t}_4]$ and $\mathbf{s}=[\mathbf{s}_1,\ldots,\mathbf{s}_6]$ and three separate alignments  $\gamma_{1,2,3}$ are shown. The matrix to the left shows how a subset of the possible alignments between $\mathbf{s}$ and $\mathbf{t}$ can be constructed as \emph{paths} between the top-left to the bottom-right element of the matrix to the right. On the right, the three different alignments generated from the paths on the left are shown.
}}
\label{fig:pathkernel}
\end{figure*}

In order to prove that $k_p$ is a positive definite kernel, we now need to show that $k_{\Gamma}:\mathbb{N}\times\mathbb{N}\rightarrow\mathbb{R}$ is indeed a kernel on integers. First lets recall that the Gamma function $\Gamma:\bC\to \bR$ is defined by,
\begin{equation}
    \Gamma(z) = \int_0^{\infty} t^{z-1} e^{-t} \rd t,
\end{equation}
for $z \in \bC$, $\mathcal{R}e(z)>0$ and that we have $\Gamma(n) = (n-1)!$ for $n\in \bN$. Let us now think of the factorial as a curious example of a positive Mercer kernel on integers:
\begin{lem}
    Let $d\in \mathbb{Z}$ and $X_d = \{ x\in \bN: x\ge \frac{d}{2}\}$.
    The function $k:X_d\times X_d\to \bR$ defined by,
    \[
        k(x, x') = (x+x'-d)!,
    \]
    is a positive definite kernel on $X_d$ corresponding to the feature mapping $\psi_d: X_d \to L^1(\bR_{\ge 0})$ mapping
    $x\in X_d$ to the function $f_x(t) = t^{x-\frac{d}{2}}e^{-\frac{t}{2}}$. I.e., considering
    the standard inner product on $L^1(\bR_{\ge 0})$ given by $\langle f, g \rangle = \int_{0}^{\infty} f(t) g(t) \rd t$
    for two integrable functions $f, g \in L^1(\bR_{\ge 0})$, we have $k(x, x') = \langle f_x, f_y \rangle$.
\end{lem}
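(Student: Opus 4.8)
The plan is to verify directly that the bilinear pairing $\langle f_x, f_{x'}\rangle = \int_0^\infty f_x(t) f_{x'}(t)\,\rd t$ reproduces the factorial $(x+x'-d)!$, and then to deduce positive definiteness from the generic fact that any kernel arising as an inner product of feature vectors is automatically positive definite. First I would check that the feature map is well defined, i.e. that $f_x(t) = t^{x - d/2} e^{-t/2}$ is integrable on $\bR_{\ge 0}$ for every $x \in X_d$. The defining constraint $x \ge d/2$ forces the exponent $x - d/2$ to be nonnegative, so $f_x$ has no singularity at the origin, while the factor $e^{-t/2}$ ensures rapid decay at infinity; hence $f_x \in L^1(\bR_{\ge 0})$. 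Applying the same constraint to both arguments also gives $x + x' - d \ge 0$, which is exactly what is needed below.

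The core of the proof is then a single computation. Substituting the feature functions and combining the exponentials,
\[
  \langle f_x, f_{x'}\rangle = \int_0^\infty t^{x - d/2} e^{-t/2}\, t^{x' - d/2} e^{-t/2}\,\rd t = \int_0^\infty t^{x+x'-d} e^{-t}\,\rd t.
\]
Since $x + x' - d$ is a nonnegative integer, this last integral is precisely $\Gamma(x+x'-d+1)$, and the identity $\Gamma(n) = (n-1)!$ evaluates it to $(x+x'-d)! = k(x,x')$. This simultaneously confirms that the product $f_x f_{x'}$ is integrable---so the pairing is genuinely defined on these functions, even though $L^1$ is not closed under products in general---and that $k(x,x') = \langle \psi_d(x), \psi_d(x')\rangle$.

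Finally, positive definiteness follows from the standard sum-of-squares argument. For any finite $\{x_1, \ldots, x_n\} \subset X_d$ and any $\{c_1, \ldots, c_n\} \subset \bR$, linearity together with the finiteness of the sum lets me move the summations inside the integral,
\[
  \sum_{i,j=1}^n c_i c_j\, k(x_i, x_j) = \int_0^\infty \Bigl( \sum_{i=1}^n c_i f_{x_i}(t) \Bigr)^2 \rd t \ge 0,
\]
which is exactly the required condition. I do not expect a genuine obstacle here: the argument is a one-line Gamma-integral evaluation followed by the generic feature-map criterion. The only points demanding care are bookkeeping ones---confirming that the domain condition $x \ge d/2$ is precisely what guarantees both the integrability of each $f_x$ and that $x + x' - d$ is a nonnegative integer so that the factorial is meaningful, and observing that the relevant products remain integrable despite the statement being phrased with $L^1$ rather than a Hilbert space.
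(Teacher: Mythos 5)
Your proof is correct and follows essentially the same route as the paper, whose entire argument is the one-line observation that the claim ``follows directly from the above integral formula for $\Gamma(x+x'-d+1)$.'' You have simply filled in the details the paper leaves implicit: the integrability check from $x \ge \frac{d}{2}$, the evaluation $\int_0^\infty t^{x+x'-d}e^{-t}\,\rd t = (x+x'-d)!$, and the standard sum-of-squares argument for positive definiteness.
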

\begin{proof}
    The result follows directly from the above integral formula for $\Gamma(x+x'-d+1)$.
\end{proof}
Note that, if $C_{\text{d}}\ge0$, we can use the idea of the above lemma and write $k_{\Gamma}(i, j) = \sum_{d=0}^{min(i, j)-1} \langle \phi_d(i), \phi_d(j)\rangle$, where $\phi_d:X_{2+2d}\to L^1(\bR_{\ge 0})$ is the feature map mapping integers to functions which is given by,
\[
    (\phi_d(i))(t) = \frac{C_{\text{d}}^{\frac{d}{2}}C_{\text{hv}}^{i-1-d}}{(i-1-d)!}t^{i-1-\frac{d}{2}}e^{-\frac{t}{2}},
\]
and $\langle f, g \rangle$ is again the inner-product of functions obtained by integration. $k_d(i, j)=\langle \phi_d(i), \phi_d(j)\rangle$ is a kernel on $X_{2d+2}= \{x \in \bN: x > d\}$. Note that the condition $d\le \min(i, j)-1$, $i, j\in \bN$ in the summation appearing in the definition of $k_{\Gamma}$ is equivalent to $d<i$ and $d<j$, i.e. $i,j\in X_{2d+2}$. Let us call the extension of $k_d$ to $\bN\times \bN$ $\hat{k}_d$, so that $\hat{k}_d(i, j)=0$ if $i$ or $j\notin X_{2d+2}$ and $\hat{k}_d(i, j)=k_d(i, j)$ if $i, j\in X_{2d+2}$. Then $\hat{k}_d:\bN\times \bN \to \bR$ is a positive definite kernel by construction and we have,
\begin{gather}
  \begin{aligned}
    k_{\Gamma}(i,j) &= \sum_{d=0}^{\min(i,j)-1} C_{\text{hv}}^{i+j-2-2d} C_{\text{d}}^d \frac{(i+j-2-d)!}{(i-1-d)!(j-1-d)!d!}\\
    &= \sum_{d=0}^{\min(i,j)-1} k_d(i, j) =\nonumber
    \sum_{d=0}^{\min(i, j)-1} \delta_{d<i}k_d(i, j)\delta_{d<j}\\
    &=\sum_{d=0}^{\min(i,j)-1} \hat{k}_d(i,j)\nonumber
    = \sum_{d=0}^{\infty} \hat{k}_d(i, j)\nonumber,
  \end{aligned}
\end{gather}
where $\delta_{d<i}=1$ if $d<i$ and zero otherwise. For any finite set of integers, only finitely many terms in the sum above are non-zero and the kernel $k_{S}$ is clearly positive since it is a sum of positive kernels.
\begin{cor}
Let $k_{\Sigma}:\Sigma\times \Sigma\to \bR$ be a continuous positive definite kernel on $\Sigma$, where $\Sigma$ is a separable metric space. Then the associated path kernel $k_{p}:Seq(\Sigma)\times Seq(\Sigma)\to \bR$ is a positive definite kernel for any $C_{\text{d}}\ge 0$ and $C_{\text{hv}}\in \bR$.
\end{cor}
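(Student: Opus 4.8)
The plan is to combine the structural result of Lemma~\ref{lem:lemma1} with the positive definiteness of $k_\Gamma$ established in the discussion immediately preceding the corollary. First I would recall that, as shown in \citep{Baisero:2013tm}, the recursively defined path kernel $k_p$ of \eqref{eq:pathkernelrecursive} admits the closed form
\[
k_p(\mathbf{s},\mathbf{t}) = \sum_{i=1}^{|\mathbf{s}|}\sum_{j=1}^{|\mathbf{t}|} k_\Sigma(\mathbf{s}_i,\mathbf{t}_j)\, k_\Gamma(i,j),
\]
with $k_\Gamma$ as in \eqref{eq:gammakernel}. This exhibits $k_p$ as exactly an instance of the sequence kernel of \eqref{eq:sequencekernel} in which the structure kernel $k_S$ is taken to be $k_\Gamma$, so the corollary reduces to checking the hypotheses of Lemma~\ref{lem:lemma1}.

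Next I would verify that $k_\Gamma$ is a positive definite kernel on $\bN$ under the stated hypotheses $C_{\text{d}}\ge 0$ and $C_{\text{hv}}\in\bR$. This is precisely the content of the feature-map computation above: writing $k_\Gamma(i,j)=\sum_{d=0}^{\infty}\hat k_d(i,j)$ as a locally finite sum of the extended kernels $\hat k_d$, each $\hat k_d$ is positive definite by construction since on $X_{2d+2}$ it arises from the $L^1(\bR_{\ge 0})$ inner product $\langle \phi_d(i),\phi_d(j)\rangle$ and is extended by zero elsewhere. Because only finitely many terms are nonzero on any finite index set, and a sum of positive definite kernels is positive definite, $k_\Gamma$ is positive definite. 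Finally, with $k_\Sigma$ continuous and positive definite on the separable metric space $\Sigma$ and $k_\Gamma$ positive definite on $\bN$, I would invoke Lemma~\ref{lem:lemma1} directly to conclude that $k_p$ is positive definite.

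The main obstacle is not the final application of Lemma~\ref{lem:lemma1}, which is immediate once both factor kernels are known to be positive definite, but rather justifying the positive definiteness of $k_\Gamma$ over the \emph{full} parameter range $C_{\text{hv}}\in\bR$ permitted by the statement. The delicate point is confirming that the construction via $\phi_d$ does not covertly require $C_{\text{hv}}\ge 0$: in the feature map the factor $C_{\text{d}}^{d/2}$ forces the condition $C_{\text{d}}\ge 0$ so as to remain real, whereas $C_{\text{hv}}$ enters only through the power $C_{\text{hv}}^{\,i-1-d}$, whose exponent $i-1-d$ is a nonnegative integer on the support of the sum (since $d\le\min(i,j)-1$). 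Hence no fractional or negative powers of $C_{\text{hv}}$ ever appear, the feature functions $\phi_d$ are well defined for every real $C_{\text{hv}}$, and the positive definiteness of $k_\Gamma$ — and therefore of $k_p$ — holds on the entire claimed parameter range.
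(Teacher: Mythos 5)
Your proposal is correct and follows essentially the same route as the paper: write $k_p$ in the form of Lemma~\ref{lem:lemma1} with $k_S = k_\Gamma$, establish positive definiteness of $k_\Gamma$ via the feature maps $\phi_d$ and the zero-extended kernels $\hat k_d$ summed over $d$, and then apply Lemma~\ref{lem:lemma1}. Your additional check that $C_{\text{hv}}$ only ever appears with nonnegative integer exponents (so the full range $C_{\text{hv}}\in\bR$ is covered, while $C_{\text{d}}^{d/2}$ forces $C_{\text{d}}\ge 0$) makes explicit a point the paper leaves implicit, but the argument is the same.
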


\section{Experiments}\label{sec:experiments}
\begin{figure*}
  \begin{center}
    \includegraphics[width=0.3\textwidth]{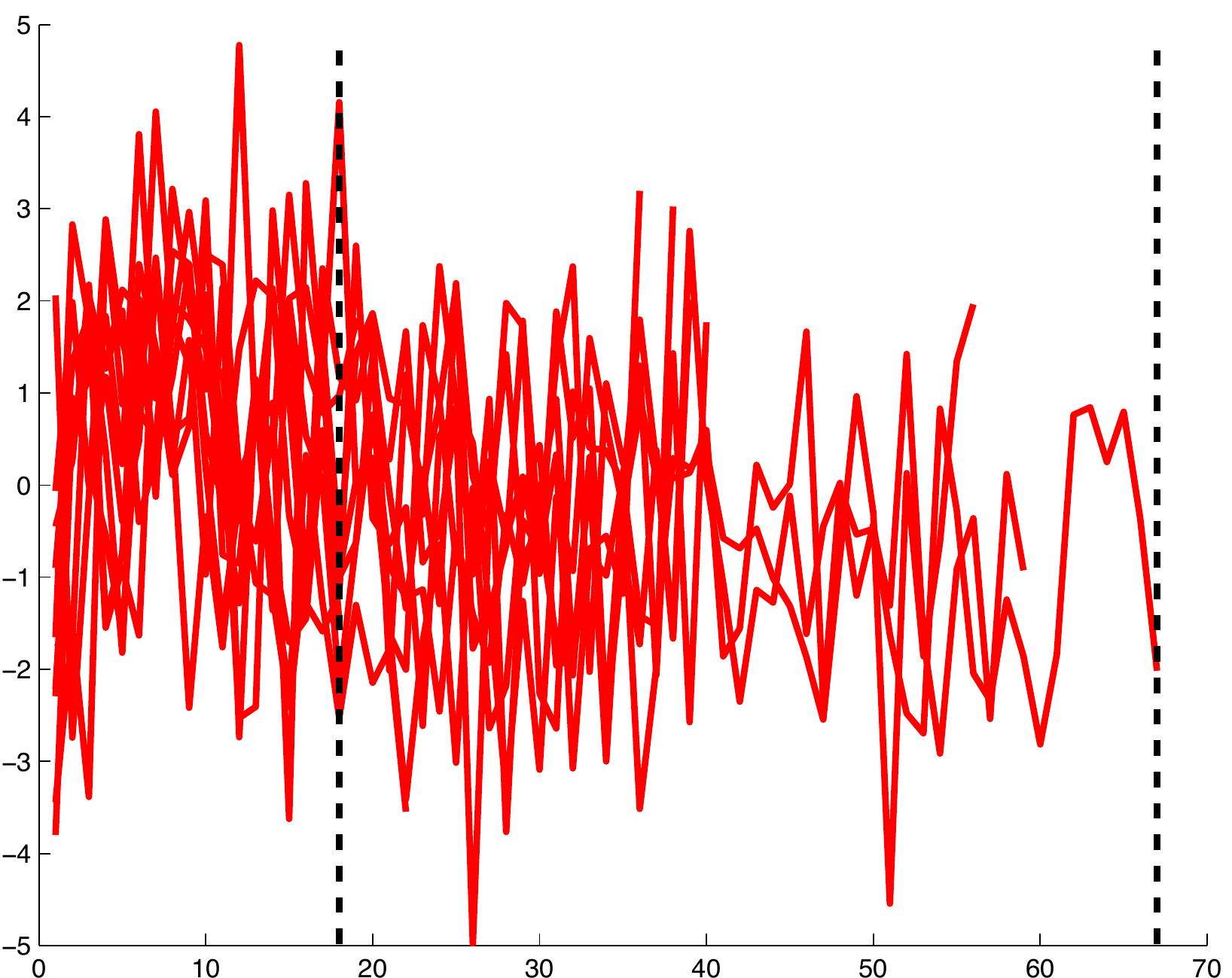}
    \includegraphics[width=0.3\textwidth]{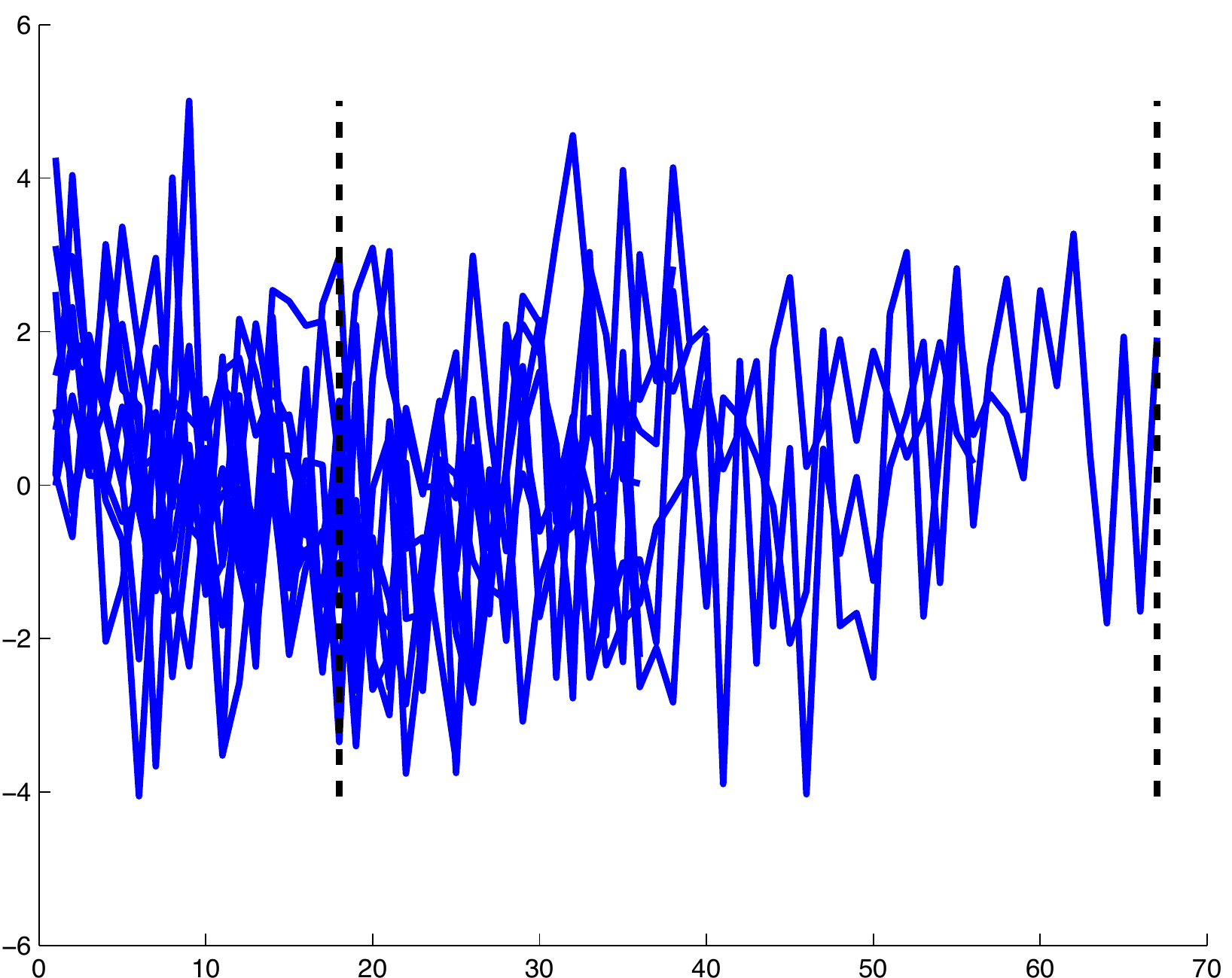}
    \includegraphics[width=0.3\textwidth]{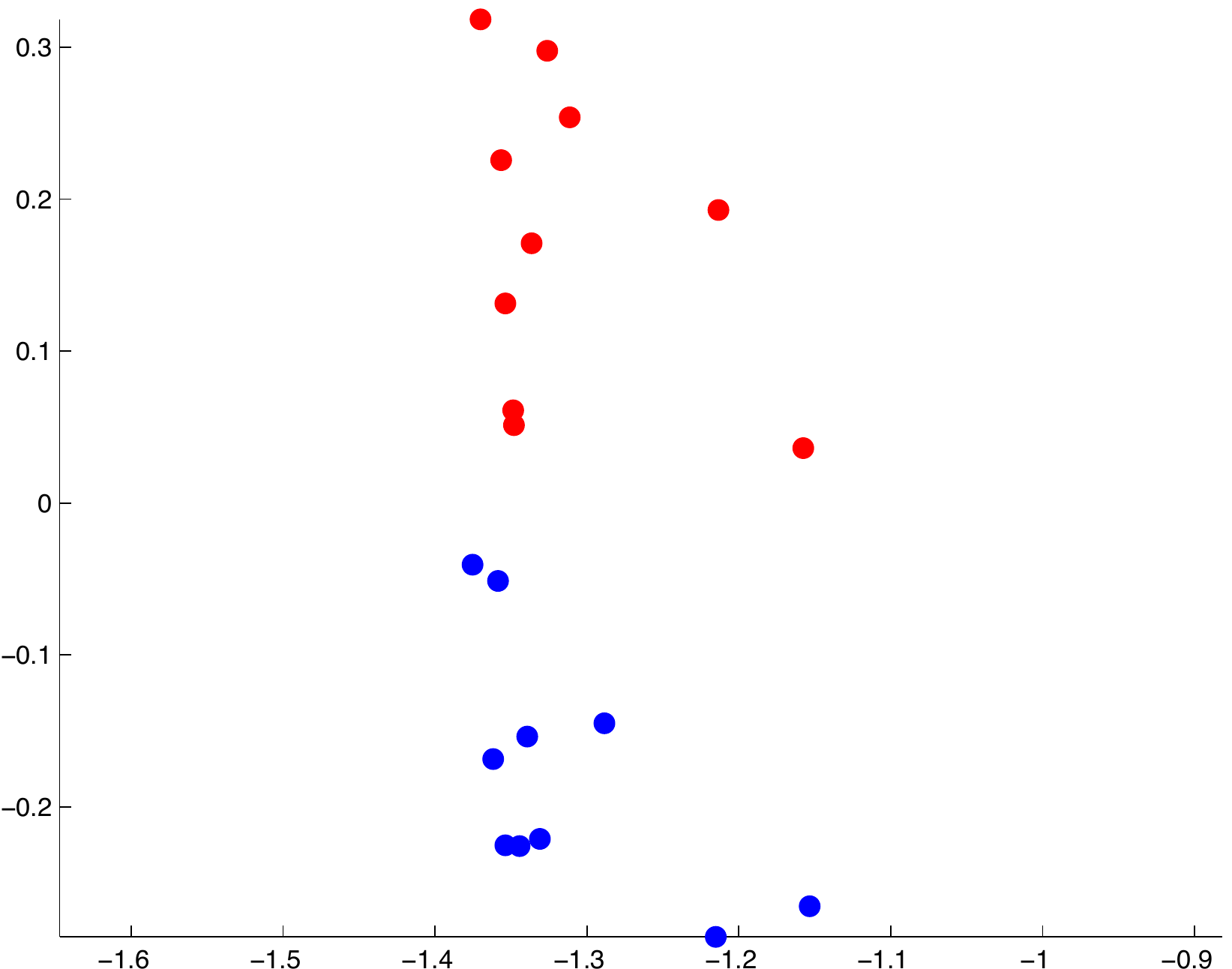}
  \end{center}
  \caption{{\it The above figure depicts the the first toy data-set onto which we have applied the path sequence kernel. The left-most image shows the first class which consists of $10$ noisy sine curves of different length while the middle image shows the other class which consists of noisy cosine curves. The dashed line indicate the length of the longest and the shortest curve in each data-set respectively. The right-most image shows the embedding defined by the first two principal components given by the trained kernel.}}
  \label{fig:gp1}
\end{figure*}

In this section we will experimentally evaluate the performance of the path sequence kernel on a set of real sequential classification data-sets. However, to provide intution for the approach, we will first show how the proposed path sequence kernel represent two sets of toy-data. One of the motivations behind this work is to provide a vectorial embedding of sequences of different length. To evaluate this we, generate $10$ noisy sine and cosine curves of different length as shown in Figure~\ref{fig:gp1}. We now wish to find an embedding that separates the two classes of curves. By formulating the classification task as a regression problem to a $1-C$ encoding and placing a Gaussian Process prior \citep{Rasmussen:2005te} over the mapping, we can learn the kernel parameters through a maximum-likelihood approach. In Figure~\ref{fig:gp1}, the resulting embedding is displayed, clearly showing how the kernel manages to separate the two classes. It is important to note that both the sine and the cosine curves are generated over a full period which means that the first order statistics for the curves are the same. The discriminating information in the data is hence contained in the sequential structure which the Path sequence kernel extracts.

One of the benefits of the proposed decomposable kernel is that, by learning its parameters, we can determine if the discriminating information is contained in the structure or symbol level of the sequences. To evaluate this we generated a second toy data set Figure~\ref{fig:gp2}. The data-set consists of $10$ noisy sine and square waves. Half of the sequences from each class have been altered such that $5$ symbols at random places take the value $4$. We will conduct two experiments on this data. In the first we will learn the parameters of the kernel as to separate the sine from the square waveform, while for the second experiment, we want to separate the sequences containing the randomly positioned symbol $4$ irrespective of waveform. In the first experiment the structure of the symbols are much more important while in the second the only distinguishing aspect is that the sequence contains the symbol $4$. This is reflected by the learned parameters, for the first experiment the coefficent for making diagonal moves, $C_D$, which reflects the importance of the sequences being aligned, is much higher compared to the second experiment where there is little difference between diagonal and horizontal move reflecting that the information is contained at the symbol level of the sequences. In Figure~\ref{fig:gp2} the embedding and the kernel matrices are shown.

\begin{figure*}[t]
  \begin{center}
    \includegraphics[width=0.24\textwidth]{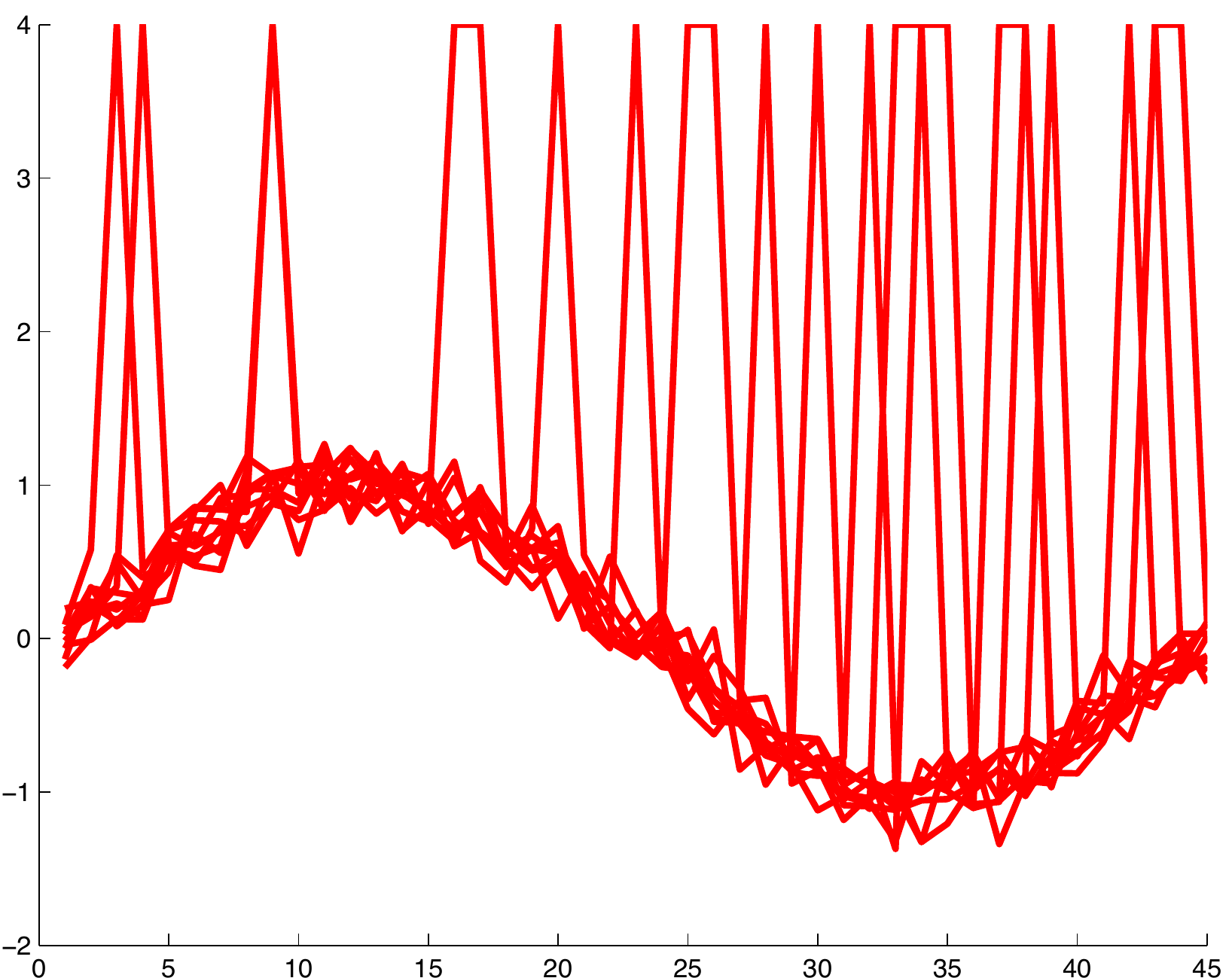}
    \includegraphics[width=0.24\textwidth]{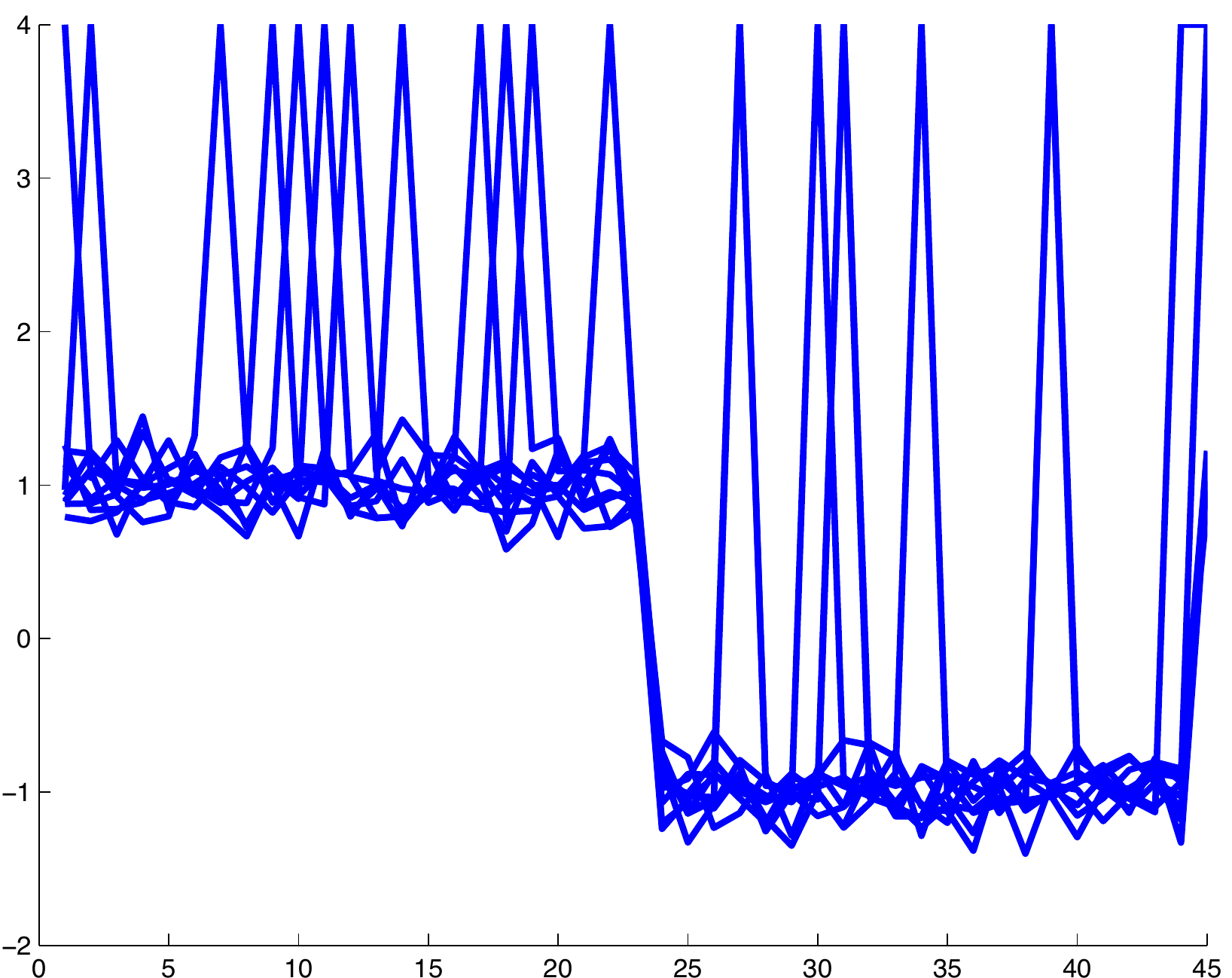}
    \includegraphics[width=0.24\textwidth]{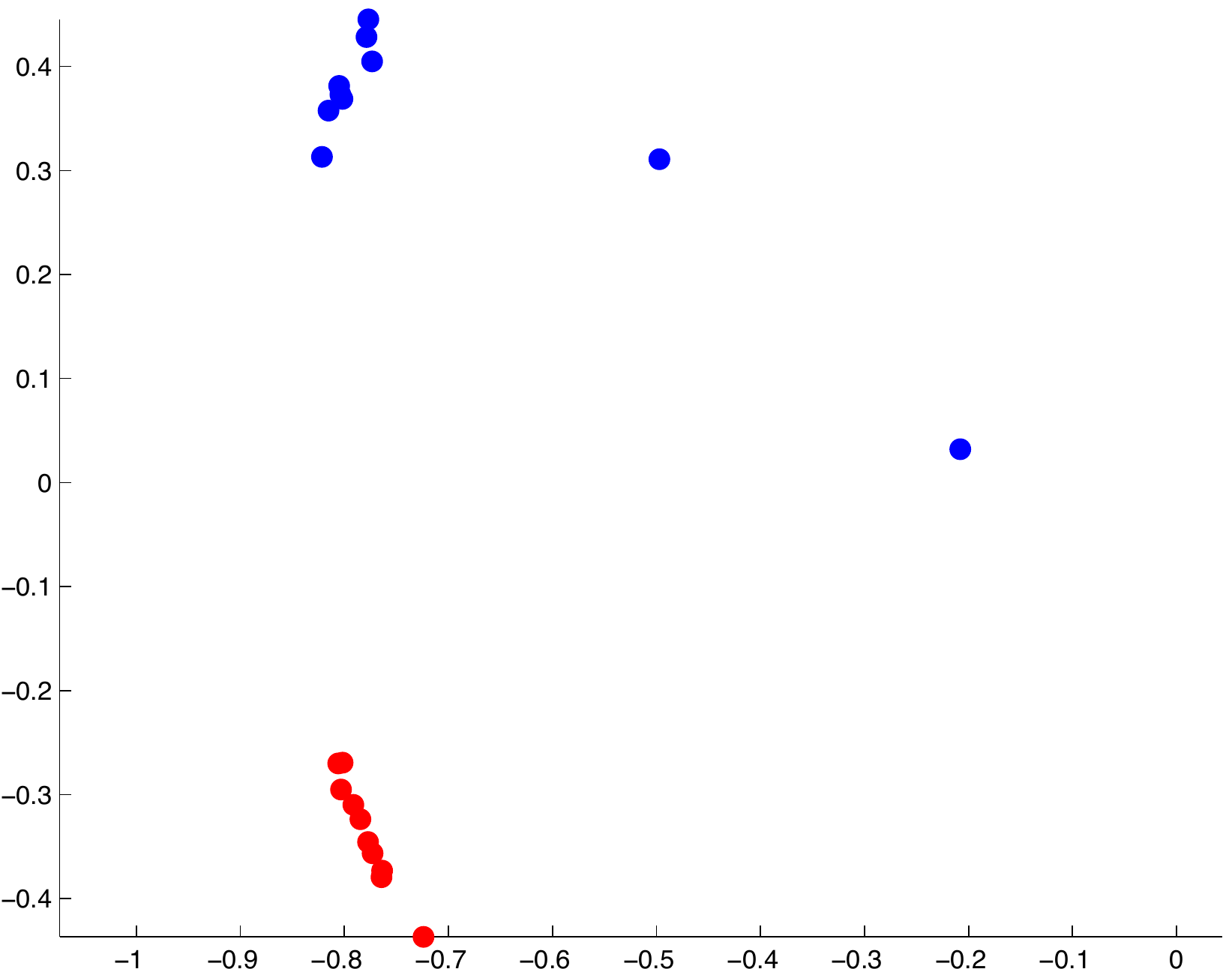}
    \includegraphics[width=0.24\textwidth]{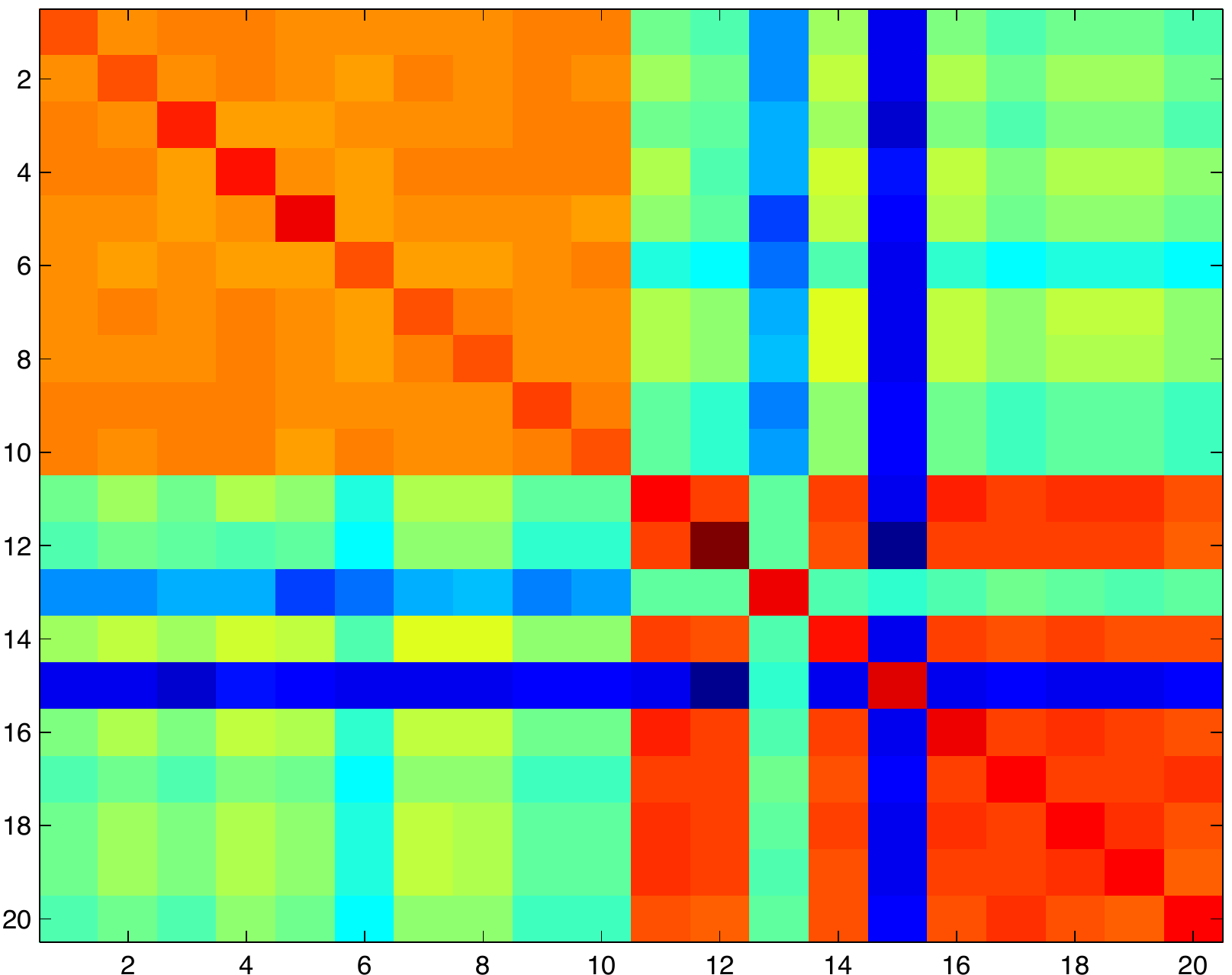}\\
    \includegraphics[width=0.24\textwidth]{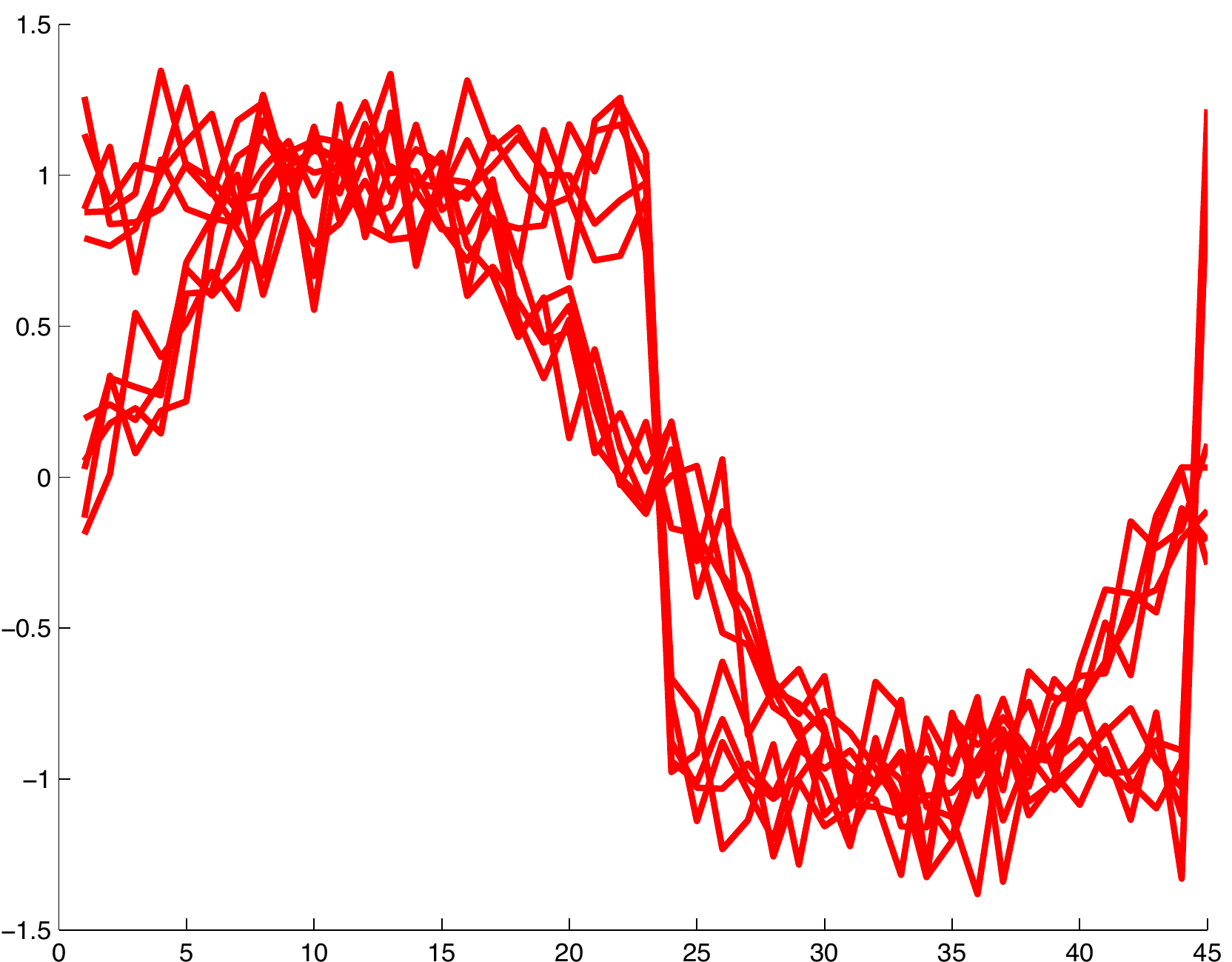}
    \includegraphics[width=0.24\textwidth]{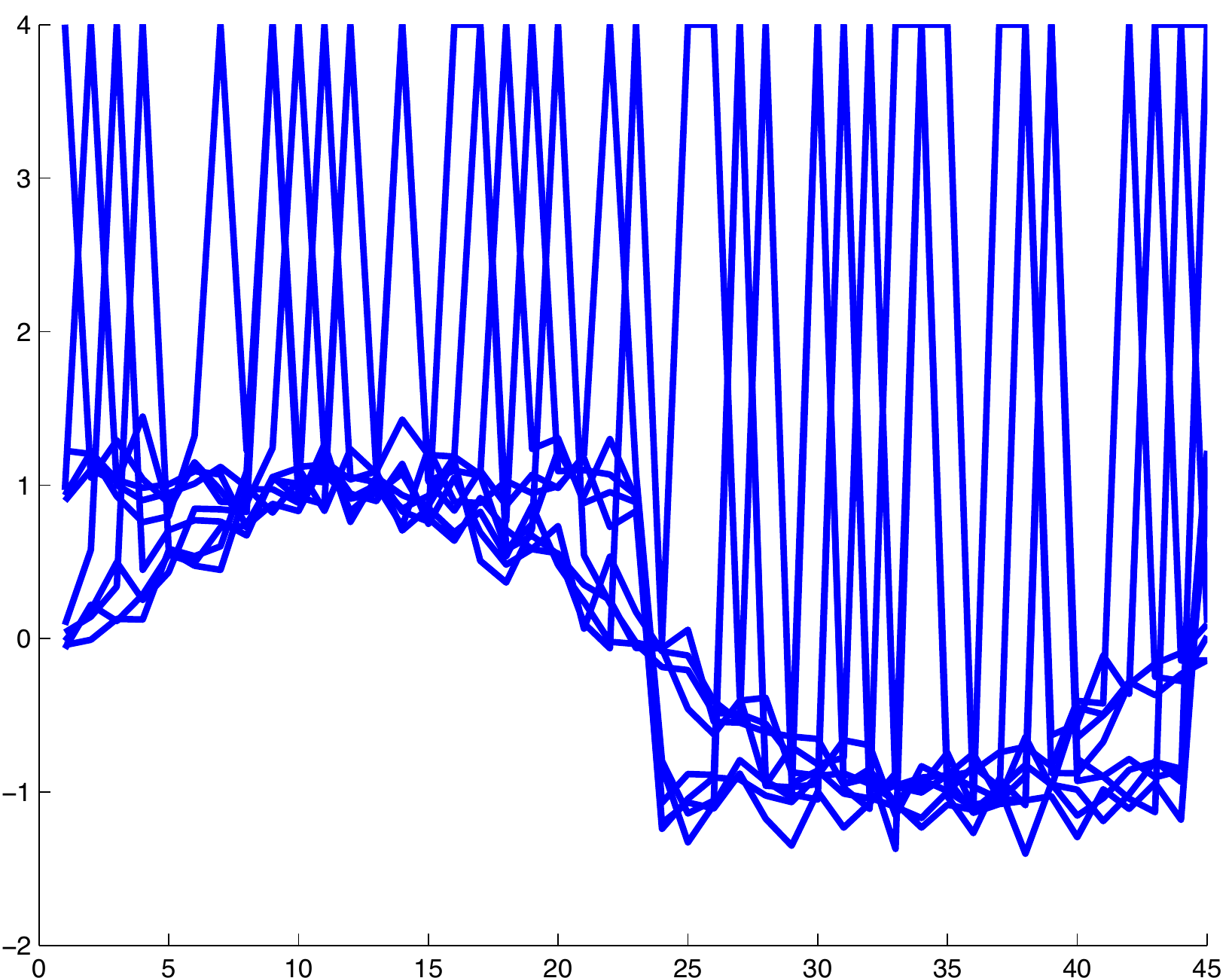}
    \includegraphics[width=0.24\textwidth]{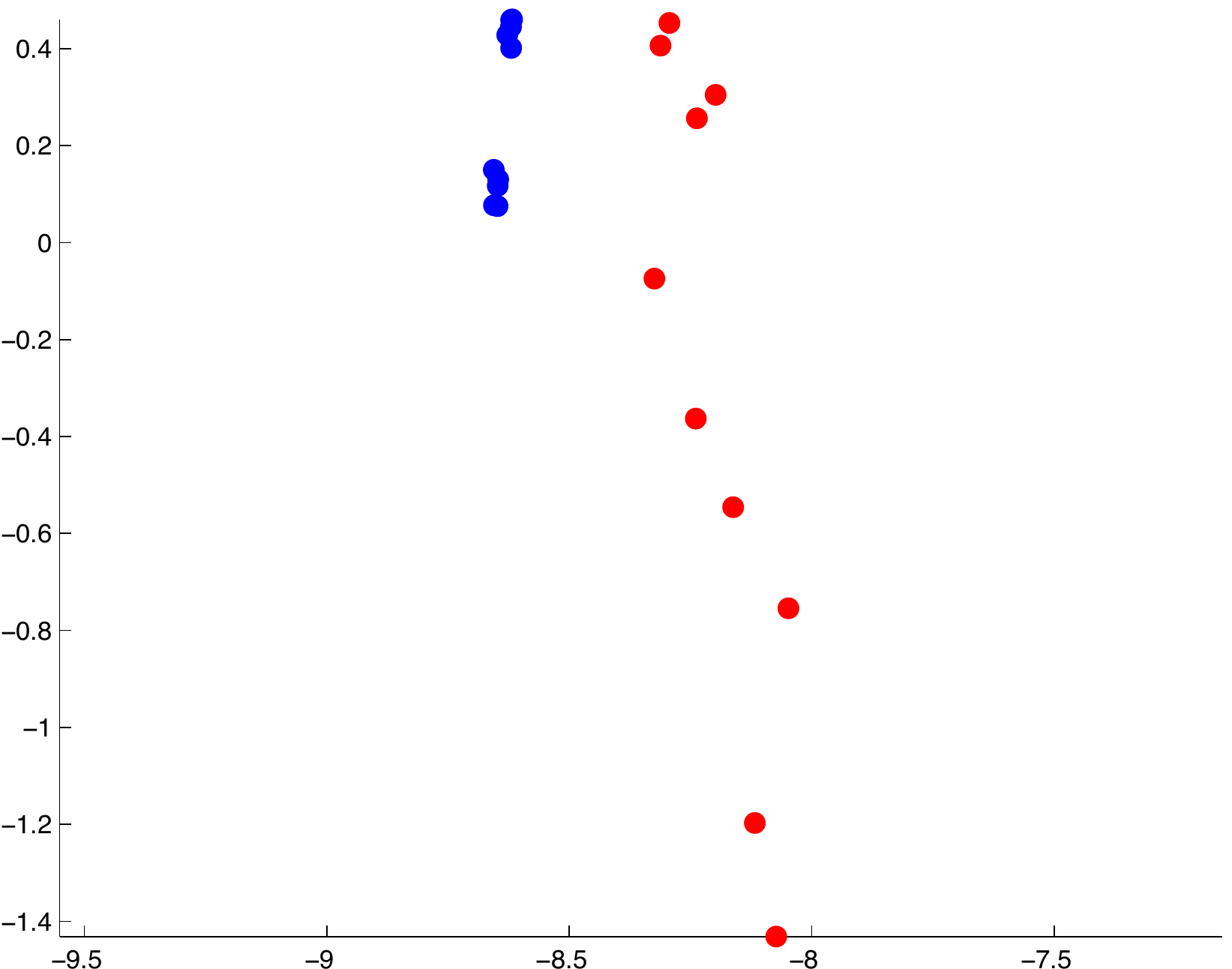}
    \includegraphics[width=0.24\textwidth]{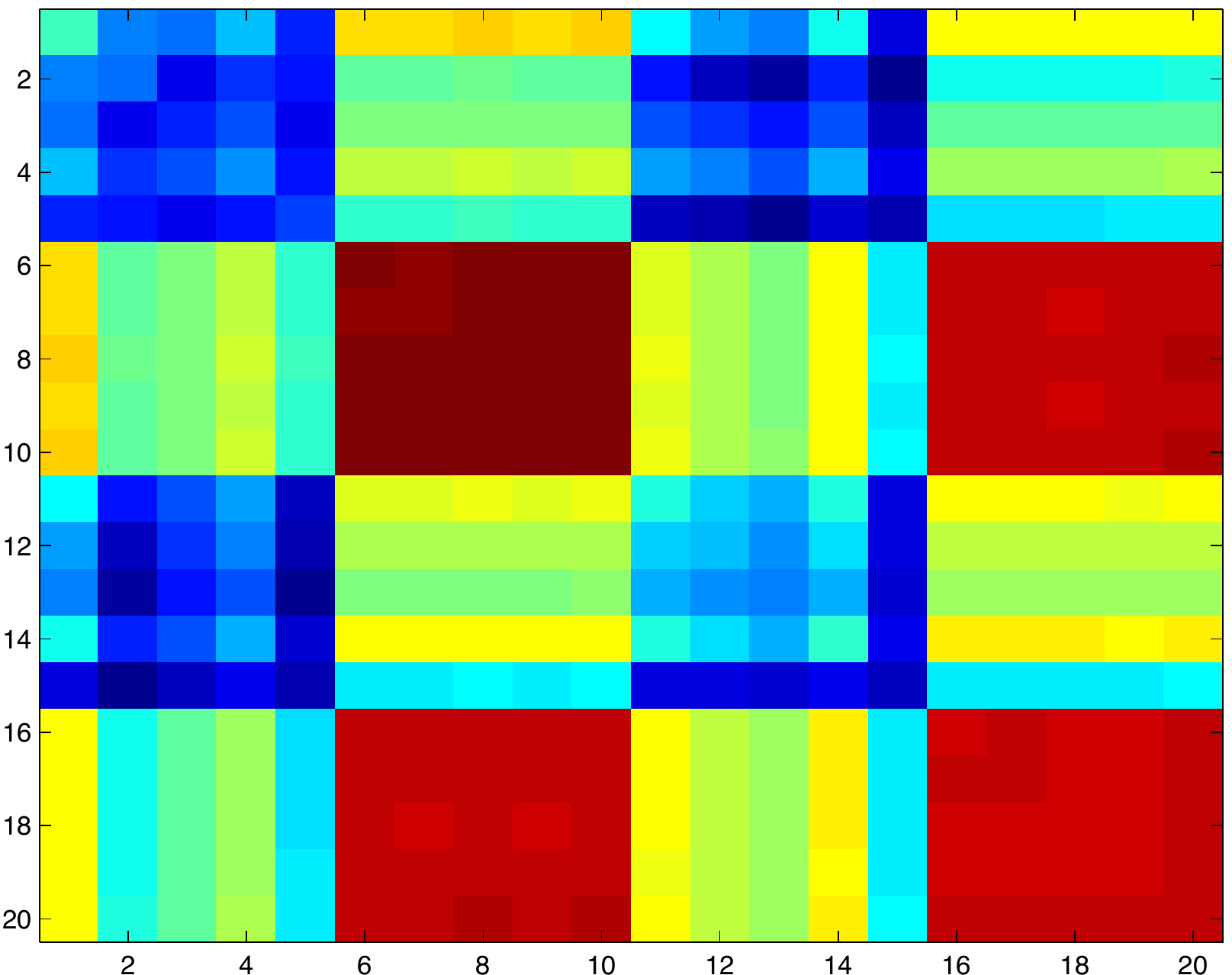}\\
  \end{center}
  \caption{{\it The figure shows the second toy data-set presented in the paper. Each row corresponds to a specific experiment setting. The first two columns show the two input classes while the third shows the embedding defined by the two first principal components and the last column depicts the learned kernel matrix evaluated on the input data.}}
  \label{fig:gp2}
\end{figure*}

We will now proceed to evaluate the performance of the different kernels on a set of well known sequential classification data-sets from the UCI Machine Learning repository \citep{Bache+Lichman:2013} with varying length, dimension and number of classes, see Table~\ref{tab:data}. We compared three different kernels: the exponential, the path sequence kernels and the global alignment kernel\cite{Cuturi:2007db}. In each of the experiments the same exponential kernel is used to represent the symbol space such that the kernels only differ in how they represent the structural component of each sequence. Classification is performed by applying a support vector machine \cite{CC01a} to the space induced by the various kernel. The parameters of the kernels and the classifier are learned using nested-cross validation with $3$ outer and inner folds and $3$ and $20$ repetitions respectively. The outer cross-validation iterates through different divisions of the data into training and test sets. The inner cross-validation uses the training set to perform parameter selection, using the established number of folds and repetitions. The chosen parameters are finally used to test the model on the test set and the results are measured and averaged over the previously described number of folds and repetitions.

To make sure that the discriminative information in the data resides in the structure and not only in the first order statistics of the symbol space we tried to classify the fixed length data by assuming each dimension to be independent. To do so we concatenated each symbol in a sequence and used an exponential kernel and a Euclidean distance as a similarity measure. The results are shown in Table~\ref{tab:results}. As can be seen, the performance for the Libras and the different PEMS data-sets are roughly random indicating that the structure is indeed important.
\begin{table*}[t]
  \begin{center}
    \scalebox{0.71}{\begin{tikzpicture}
  \matrix (first) [table,text width=11em]
{
  \node[text width=8em]{};& dim & length & \#classes & \#N\\
  \node[text width=8em]{AUSLAN}; & 22 & 45-136 (55) & 95 & 2565 \\
  \node[text width=8em]{Libras}; & 2 & 45 & 15 & 945 \\
  \node[text width=8em]{PEMS100}; & 963 & 144 & 7 & 440 \\
  \node[text width=8em]{PEMS95}; & 335 & 144 & 7 & 440 \\
  \node[text width=8em]{PEMS90}; & 171 & 144 & 7 & 440 \\
  \node[text width=8em]{Vowels}; & 12 & 7-29 (15) & 9 & 640 \\
  \node[text width=8em]{Characters}; & 3 & 60-182 (122) & 20 & 2858\\
};
\end{tikzpicture}}
  \end{center}
  \caption{{\it The above table charaterizes the UCI \citep{Bache+Lichman:2013} data-sets on which our experimental evaluation is performed. From top to bottom the data sets where presented in \citep{kadous2002temporal,Dias:2009kt,Cuturi:2010vq,Kudo:1999hq,Williams:2006ce}. The column from left to right show the dimensionality of the symbol space, the range of lengths of the sequences and their median length within brackets, number of classes and the number of sequences. The three different PEMS data-sets are projections of the data onto its principal components such that 100,95 and 90 percent of the variance in the data is retained.}}
  \label{tab:data}
\end{table*}
Comparing the three different kernels, we can see that the path sequence kernel is consistently outperforming the other two kernels. It is interesting to see the significant difference in performance between the exponential and the path sequence kernels. We argue that the difference in performance is due to the stationary characteristics of the exponential kernel where the influence of each symbol match only depends on the difference in position. The path kernel has a more fine-grained characteristic where the actual position of a match incluences the similarity score. Both the path and the global alignment kernel take all possible alignments into account. As we see, the path kernel significantly outperforms the global alignment kernel. This can be explained by the dominating influence of the ``best'' alignment in the global alignment kernel compared to the path kernel which more gracefully accumulates information from all possible alignments into the final kernel value. This behavior also means that there is a strong preference towards sequences of the same length for the global alignment kernel which is something that can explain the big difference in performance compared with the path kernel on the AUSLAN data-set. We believe this shows the value of a non-stationary structured kernel for representing sequences. The path perspective provides an intuitive, rigorous and interpretable formulation for designing such new kernels.

\section{Conclusion}\label{sec:conclusion}
In this paper we have presented an approach to combine kernels in a structured manner such that the resulting measurement represent a Mercer kernel. This leads to kernels that provides a (dis)similarity measure between sequences of different length. In particular we proved that a recently proposed (dis)similarity measure \citep{Baisero:2013tm} falls within this family which allows us to adopt the intuitive notion of alignments which we believe will be provide useful insights for designing new kernels. We showed experimentally how the path sequence kernel significantly outperforms previous state-of-the-art methods. In future work we aim to further establish the path perspective and discuss how new novel kernels which include higher-order paths which takes more than simple moves into account can be designed.

\begin{table*}[t]
  \begin{center}
    \scalebox{0.69}{\begin{tikzpicture}
  \matrix (first) [table,text width=7em]
{
& AUSLAN   & Libras & PEMS100 & PEMS95 & PEMS90 & Vowels & Characters\\
RBF   & - & $6.67\pm0.41\%$ & $14.13\pm1.45\%$ & $14.74\pm1.42\%$ & $14.11\pm1.29\%$ & - & -\\
$k_{ga}$   & {$68.08\pm8.40\%$} & $84.18\pm3.99\%$ & $39.16\pm17.10\%$ & $39.27\pm19.45\%$ & $36.44\pm19.44\%$ & $96.51\pm1.67\%$ & \node[fill=green]{$98.40\pm0.11\%$};\\
$k_{e}$   & $67.06\pm1.99\%$ & $43.55\pm3.83\%$ & $15.65\pm6.17\%$ & $12.73\pm0.9\%$ & $12.58\pm1.10\%$ & $96.87\pm0.78\%$ & -\\
$k_{p}$   & \node[fill=green]{$92.40\pm0.63\%$}; & \node[fill=green]{$87.37\pm2.86\%$}; & \node[fill=green]{$61.32\pm15.17\%$}; & \node[fill=green]{$60.26\pm16.19\%$}; & \node[fill=green]{$54.05\pm17.17\%$}; & \node[fill=green]{$97.97\pm0.55\%$}; & $98.17\pm0.70\%$\\
};
\end{tikzpicture}}
  \end{center}
  \label{tab:results}
  \caption{{\it The above table shows the average classification performance and the standard deviation over the multiple runs. The symbols $k_{ga}$,$k_e$ and $k_p$ refers to the global alignment, exponential sequence and the path sequence kernel respectively.}}
\end{table*}

\bibliographystyle{natbib}
\bibliography{carl}

\begin{thebibliography}{26}
\providecommand{\natexlab}[1]{#1}
\providecommand{\url}[1]{\texttt{#1}}
\expandafter\ifx\csname urlstyle\endcsname\relax
  \providecommand{\doi}[1]{doi: #1}\else
  \providecommand{\doi}{doi: \begingroup \urlstyle{rm}\Url}\fi

\bibitem[Bache and Lichman(2013)]{Bache+Lichman:2013}
K~Bache and M~Lichman.
\newblock {UCI Machine Learning Repository}.
\newblock University of California, Irvine, School of Information and Computer
  Sciences, 2013.

\bibitem[Bahlmann et~al.(2002)Bahlmann, Haasdonk, and
  Burkhardt]{Bahlmann:2002hi}
C~Bahlmann, B~Haasdonk, and H~Burkhardt.
\newblock {Online handwriting recognition with support vector machines - a
  kernel approach}.
\newblock In \emph{8th International Workshop on Frontiers in Handwriting
  Recognition}, pages 49--54, August 2002.

\bibitem[Baisero et~al.(2013)Baisero, Pokorny, Kragic, and Ek]{Baisero:2013tm}
Andrea Baisero, Florian~T Pokorny, Danica Kragic, and Carl~Henrik Ek.
\newblock {The Path Kernel}.
\newblock In \emph{International Conference on Pattern Recognition Applications
  and Methods}, February 2013.

\bibitem[Chang and Lin(2011)]{CC01a}
Chih-Chung Chang and Chih-Jen Lin.
\newblock {LIBSVM: A library for support vector machines}.
\newblock \emph{ACM Transactions on Intelligent Systems and Technology},
  2:\penalty0 27:1--27:27, 2011.

\bibitem[Cortes et~al.(2004)Cortes, Haffner, and Mohri]{Cortes:2004vf}
Corinna Cortes, Patrick Haffner, and Mehryar Mohri.
\newblock {Rational kernels: Theory and algorithms}.
\newblock \emph{The Journal of Machine Learning Research}, 5:\penalty0
  1035--1062, 2004.

\bibitem[Cuturi(2010)]{Cuturi:2010vq}
Marco Cuturi.
\newblock {Fast Global Alignment Kernels}.
\newblock In \emph{International Conference on Machine Learning}, 2010.

\bibitem[Cuturi et~al.(2007)Cuturi, Vert, Birkenes, and Matsui]{Cuturi:2007db}
Marco Cuturi, Jean-Philippe Vert, Oystein Birkenes, and Tomoko Matsui.
\newblock {A Kernel for Time Series Based on Global Alighments}.
\newblock In \emph{IEEE International Conference on Acoustics, Speech and
  Signal Processing}, pages II--413--II--416. IEEE, 2007.

\bibitem[Damerau(1964)]{Damerau:1964hu}
Fred~J Damerau.
\newblock {A technique for computer detection and correction of spelling
  errors}.
\newblock \emph{Communications of the ACM}, 7\penalty0 (3):\penalty0 171--176,
  March 1964.

\bibitem[Dias et~al.(2009)Dias, Madeo, Rocha, Biscaro, and Peres]{Dias:2009kt}
D~B Dias, R~C~B Madeo, T~Rocha, H~H Biscaro, and S~M Peres.
\newblock {Hand movement recognition for Brazilian Sign Language: A study using
  distance-based neural networks}.
\newblock In \emph{Neural Networks, 2009. IJCNN 2009. International Joint
  Conference on}, pages 697--704, 2009.

\bibitem[Haasdonk(2005)]{Haasdonk:2005di}
B~Haasdonk.
\newblock {Feature space interpretation of SVMs with indefinite kernels}.
\newblock \emph{IEEE Transactions on Pattern Analysis and Machine
  Intelligence}, 27\penalty0 (4):\penalty0 482--492, 2005.

\bibitem[Harris(1970)]{Harris:1970wr}
Zelig Harris.
\newblock {Distributional structure}.
\newblock In \emph{Papers in Structural and Transformational Linguistics},
  pages 775--794. D. Reidel Publishing Company, Dordrecht, Holland, 1970.

\bibitem[Haussler(1999)]{Haussler1999a}
David Haussler.
\newblock {Convolution kernels on discrete structures}.
\newblock Technical report, University of California at Santa Cruz, 1999.

\bibitem[Kadous(2002)]{kadous2002temporal}
M~W Kadous.
\newblock \emph{{Temporal Classification: Extending the Classification Paradigm
  to Multivariate Time Series}}.
\newblock PhD thesis, University of New South Wales, 2002.

\bibitem[Kapoor et~al.(2009)Kapoor, Grauman, Urtasun, and
  Darrell]{Kapoor:2009il}
Ashish Kapoor, Kirsten Grauman, Raquel Urtasun, and Trevor Darrell.
\newblock {Gaussian Processes for Object Categorization}.
\newblock \emph{International journal of computer vision}, 88\penalty0
  (2):\penalty0 169--188, July 2009.

\bibitem[Kudo et~al.(1999)Kudo, Toyama, and Shimbo]{Kudo:1999hq}
Mineichi Kudo, Jun Toyama, and Masaru Shimbo.
\newblock {Multidimensional curve classification using passing-through
  regions}.
\newblock \emph{Pattern Recognition Letters}, 20\penalty0 (11-13):\penalty0
  1103--1111, November 1999.

\bibitem[Lei and Sun(2007)]{Lei:2007kb}
Hansheng Lei and Bingyu Sun.
\newblock {A Study on the Dynamic Time Warping in Kernel Machines}.
\newblock In \emph{Third International IEEE Conference on Signal-Image
  Technologies and Internet-Based System SITIS}, pages 839--845. IEEE, 2007.

\bibitem[Levenshtein(1966)]{Levenshtein:1966ts}
V~I Levenshtein.
\newblock {Binary Codes Capable of Correcting Deletions, Insertions and
  Reversals}.
\newblock \emph{Soviet physics doklady}, 10:\penalty0 707, February 1966.

\bibitem[Lodhi et~al.(2002)Lodhi, Saunders, Shawe-Taylor, Cristianini, and
  Watkins]{Lodhi:2002ts}
H~Lodhi, C~Saunders, J~Shawe-Taylor, N~Cristianini, and C~Watkins.
\newblock {Text classification using string kernels}.
\newblock \emph{The Journal of Machine Learning Research}, 2:\penalty0
  419--444, 2002.

\bibitem[Mohri(2009)]{Mohri:2009ed}
Mehryar Mohri.
\newblock {Weighted Automata Algorithms}.
\newblock In \emph{Handbook of weighted automata}, pages 213--254. Springer
  Berlin Heidelberg, Berlin, Heidelberg, September 2009.

\bibitem[M{\"u}ller(2007)]{Muller:2007us}
Meinard M{\"u}ller.
\newblock {Dynamic Time Warping}.
\newblock In \emph{Information retrieval for music and motion}, pages 69--84.
  Springer Berlin Heidelberg, 2007.

\bibitem[Rasmussen and Williams(2006)]{Rasmussen:2005te}
Carl~Edward Rasmussen and Christopher K~I Williams.
\newblock \emph{{Gaussian Processes for Machine Learning (Adaptive Computation
  and Machine Learning)}}.
\newblock The MIT Press, 2006.

\bibitem[Rieck(2011)]{Rieck:2011tz}
Konrad Rieck.
\newblock {Similarity measures for sequential data}.
\newblock \emph{Wiley Interdisciplinary Reviews: Data Mining and Knowledge
  Discovery}, 1\penalty0 (4):\penalty0 296--304, 2011.

\bibitem[Rousu and Shawe-Taylor(2006)]{Rousu:2006vw}
J~Rousu and J~Shawe-Taylor.
\newblock {Efficient Computation of gapped substring kernels for large
  alphabets}.
\newblock \emph{Journal of Machine Learning Research}, 2006.

\bibitem[Sakoe and Chiba(1978)]{Sakoe:1978jp}
H~Sakoe and S~Chiba.
\newblock {Dynamic programming algorithm optimization for spoken word
  recognition}.
\newblock \emph{IEEE Transactions on Acoustics, Speech and Signal Processing},
  26\penalty0 (1):\penalty0 43--49, 1978.

\bibitem[Takimoto and Warmuth(2003)]{Takimoto:2003uq}
Eiji Takimoto and Manfred~K Warmuth.
\newblock {Path kernels and multiplicative updates}.
\newblock \emph{The Journal of Machine Learning Research}, 4:\penalty0
  773--818, 2003.

\bibitem[Williams et~al.(2006)Williams, Toussaint, and
  Storkey]{Williams:2006ce}
Ben~H Williams, Marc Toussaint, and Amos~J Storkey.
\newblock {Extracting Motion Primitives from Natural Handwriting Data}.
\newblock In \emph{Artificial Neural Networks}, pages 634--643, Berlin,
  Heidelberg, 2006. Springer Berlin Heidelberg.

\end{thebibliography}

\end{document}